\newcommand{\E}{\textup{E}}
\DeclarePairedDelimiter\floor{\lfloor}{\rfloor}
\newtheorem{definition}{Definition}[section]
\newtheorem{theorem}{Theorem}
\newtheorem{lemma}[theorem]{Lemma}
\newtheorem{proposition}[theorem]{Proposition}
\newtheorem{comment}{Comment}
\title{Exploration vs. Exploitation in Team Formation for Collaborative Work}
\author{ Ramesh Johari$^1$,      Vijay Kamble$^2$,      Anilesh K. Krishnaswamy$^1$,   Hannah Li$^1$ \\
$^1$Stanford University, \{rjohari, anilesh, hannahli\}@stanford.edu\\
$^2$University of Illinois at Chicago, kamble@uic.edu}
\date{}
\begin{document}
\maketitle

\begin{abstract}
Modern labor platforms face the online learning problem of optimizing matches between jobs and workers of unknown abilities. This problem is complicated by the rise of complex jobs on these platforms that require teamwork, such as web development and product design. Successful completion of such a job depends on the abilities of all workers involved, which can only be indirectly inferred by observing the aggregate performance of the team. 
Observations of the performance of various overlapping teams induce correlations between the unknown abilities of different workers at any given time. Tracking the evolution of this correlation structure across a large number of workers on the platform as new observations become available, and using this information to adaptively optimize future matches, is a challenging problem. 

 To study this problem, we develop a stylized model in which teams are of size 2 and each worker is drawn i.i.d. from a binary (good or bad) type distribution. Under this model, we analyze two natural settings: when the performance of a team is dictated by its strongest member and when it is dictated by its weakest member. We find that these two settings exhibit stark differences in the trade-offs between exploration (i.e., learning the performance of untested teams) and exploitation (i.e., repeating previously tested teams that resulted in a good performance). We establish fundamental regret bounds and design near-optimal algorithms that uncover several insights into these tradeoffs.
\end{abstract}

\section{Introduction}


Online labor market platforms (e.g., Upwork, Freelancer, and Amazon Mechanical Turk for online work; and Thumbtack and TaskRabbit for offline work) provide the ability for clients to hire freelancers through the platform on a task-by-task basis.  These platforms use data collected from past match outcomes to learn about freelancers, and improve matching in the future.  These platforms are primarily {\em one-to-one} matching platforms, i.e., clients hire one freelancer at a time.  However, for many tasks, clients need more than one freelancer; in such cases, they are left to form a team on their own -- platforms do not generally provide algorithmic team matching services.  
As organization structures such as ``flash teams'' and ``flash organizations'' begin to be enabled by online platforms \cite{retelny2014expert,valentine2017flash}, it will be increasingly incumbent on the platform to ensure it can optimally match teams of available freelancers to jobs at hand.

Existing research in the area of workforce utilization has focused on designing effective roles for workers and organization structures that streamline collaboration between individuals with different types of skills (``horizontal'' differentiation) \cite{talaulicar2005strategic,erb1989team}. The problem of matching teams of workers to available jobs in the face of quality uncertainty (``vertical'' differentiation), however, has largely remained unaddressed. In this paper, we bring this problem into focus.

Without any assumptions on how the composite performance of a team depends on the quality of its constituent individuals, the problem of learning the optimal composition of teams is a combinatorial search problem over all possible team combinations, and is intractable in many situations \cite{rajkumar2017learning}. In practice, it is natural to assume that the performance of a team depends in some form on the individual qualities of workers that constitute the team \cite{neuman1999relationship}. It is also natural to assume that the performance of a team can be evaluated based on the quantitative feedback received from the employer on completion of the assigned task, as is commonly seen in many online platforms.  

Such quantitative evaluation of the performance of a team reveals some information about the inherent qualities of constituent workers, and thereby the performance of other potential teams that contain some of these individuals. These learning spillovers depend on how the characteristics of workers combine to determine the performance of the team, and they can be effectively utilized to reduce the number of attempts required to obtain a good partitioning of the workers. However, optimally leveraging these spillovers requires the platform to solve the challenging problem of designing an intelligent and adaptive sequence of matches that learns at the expense of minimal loss in performance. 



 In this paper, we develop a stylized model to investigate this challenge; in the model we consider, the platform only observes a single aggregate outcome of each team's performance on a job. Thus identifying the quality of each worker requires observing their performance in distinct teams across {\em multiple jobs}.  Our model, despite its simplicity, shows surprisingly intricate structure, with the emergence of novel exploration-exploitation tradeoffs that shed light on the qualitative features of optimal matching policies in these settings.

The model we consider consists of a large number of workers.  Each worker is of one of two types: ``high'' (labeled ``1'') or ``low'' (labeled ``0''); we assume each worker is independently 1 with probability $p$.  We consider a model in discrete time, where at each time workers are matched into {\em pairs} (i.e., teams of size two) to complete jobs.  We consider two models for payoffs: the {\em Weakest Link} model, where the payoff of a team is the {\em minimum} of their types; and the {\em Strongest Link} model, where the payoff of a team is the {\em maximum} of their types.  The platform is able to observe the payoff from each pair, but not the types of individual workers; these must be inferred from the sequence of team outcomes obtained.  The goal of the platform is to maximize payoffs, so it must use the team matching to learn about workers while minimizing loss in payoff, or \emph{regret}.

The Weakest Link model is most natural when every worker's output is essential to completing the task successfully.  As an example, suppose two workers are hired to complete a web development job: one may need to complete front end development, and another back end development.  If either fails, the job itself is a failure.  
Our model assumes that assessment of the task is based on the final outcome (i.e., whether the site is functional), without attribution to the individual workers. Another example is house cleaning: if any cleaner performs her set of cleaning tasks poorly, the entire cleaning job may receive a poor rating.
The Strongest Link model is most natural when a strong worker can cover for shortcomings of her partner. This is a more natural representation of tasks that have planning and execution components, where high expertise is needed for effective planning, although a single high quality worker suffices for this, and the execution is less sensitive to skill.

The goal of the platform is to adaptively match the workforce into pairs at each stage so as to maximize the expected long-run payoff. In contrast to many online learning problems in which it is not possible to stop incurring regret after any finite time, in this problem any good policy will, in finite time, gather enough information to be able to make optimal matches (for example, one way to do this is to make all the possible matches in an initial phase, then choose an optimal partition going forward).   We therefore measure performance by computing \emph{cumulative} regret against a policy that knows all the worker types to begin with, until such a time when no additional regret is incurred thereafter.  Our main contributions involve an analysis of this regret in the Weakest Link and Strongest Link models.  We now summarize our main contributions.

{\em Weakest Link model}.  For the Weakest Link model, informally, the goal is to quickly identify all the 1 workers so that they can be matched to each other, while minimizing the number of $0-1$ matches in the process. The only way one can discover a 1 worker is if she gets matched to another 1 worker, resulting in a payoff of $1$. On the other hand, the only way to discover a 0 worker is if she gets matched to a worker who is known to be of a 1, or is later discovered to be a 1. Thus matching a worker with unknown quality to a 1 worker, either known or unknown, is critical to learning. 
This brings us to the two central questions. First, suppose that the platform has discovered a certain number of 1 workers. Should these workers be matched amongst themselves to generate high feedback (``exploit'') or should these workers be matched to workers whose qualities are unknown so as to speedup learning (``explore'')? Second, even if we want to only ``exploit'' and not ``explore" with known 1 workers, i.e., match the known 1 workers amongst themselves, what is the best way to adaptively match the unknown pool of workers amongst themselves? 

It turns out that the answers to these questions depends on the expected proportion of high quality workers in the population $p$. As $p\rightarrow 1$, i.e., when one expects there to be an abundance of 1 workers in the population, we construct a policy for matching unknown workers that we call \textsf{Exponential cliques}, that is asymptotically optimal (in $p$), without requiring the discovered 1 workers to explore. That is, the 1 workers can be simply matched amongst themselves after identification. This is a bit counterintuitive because in this regime, the number of 1 workers that get identified in the first stage is high enough to be able to use these workers to learn the quality of \emph{all} the remaining workers in the second stage. One incurs no regret thereafter. Although this type of an approach appears to be tempting from an operational perspective -- it is certainly fastest in terms of learning -- we nevertheless show that it is strictly suboptimal. In stark contrast, for $p$ low enough, we show that any optimal matching policy \emph{must} match discovered 1 workers with workers of unknown quality. 

{\em Strongest Link model}.  For the Strongest Link model, informally, the goal is to match 1 workers with 0 workers to the extent possible. If the number of workers is large and $p<0.5$, then some $0-0$ matches are inevitable, and thus one wants to minimize the number of $1-1$ matches.  On the other hand, if $p>0.5$, then some $1-1$ matches are inevitable, and thus one wants to minimize the number of $0-0$ matches. 
In either case, 0 workers get discovered when they are matched to other 0 workers, whereas 1 workers are identified by either being matched to a known 0 worker, or to an unknown 0 worker who later gets identified as being a 0. 
Thus, matching a worker with unknown quality to a 0 worker, either known or unknown, is critical to learning. But in contrast to the Weakest Link model, the question of what is to be done with the discovered 0 workers is less uncertain. Though it may seem natural at first glance to not match known 0 workers to anyone but each other, it turns out that it is strictly better to utilize these known-quality workers to explore the unknown worker set. The central question then becomes: how does one optimally explore using these known 0 workers? Restricting ourselves to a natural class of exploration policies, we uncover a sharp transition in the structure of the optimal exploration policy at $p = 0.5$. 

{\em Organization of the paper}.  The remainder of the paper is organized as follows.  We discuss relevant literature in Section~\ref{sec:lit}. We introduce the model, the problem formulation and certain reductions of the problem in Section~\ref{sec:model}. In Sections~\ref{sec:min} and~\ref{sec:max}, we focus on the Weakest Link and the Strongest Link models, respectively. We conclude with a discussion of our results in~\ref{sec:conclusion}. The proofs of all of our results are deferred to the appendix.

\section{Related Work}\label{sec:lit}



In this paper, we focus on optimizing platform performance while learning to form optimal teams, under the presence of uncertainty about worker qualifications and ultimately each worker's contribution to a job. There has been some work on the problem of optimal team formation, not emphasizing the aspects of learning or optimization, but the incentivization of workers by the platform to participate or exert effort \cite{babaioff2006combinatorial,carlier2010matching}. This line of work uses game theoretic models to analyze optimal aggregate performance when the platform can see only the project outcome and not the individual contributions. The platform can offer prices or contracts to strategic agents to motivate them to work in teams. 

Kleinberg and Raghu \cite{kleinberg2015team} propose a method of learning which individuals, among a pool of candidates, will together form the best team. This method shows that even when a project depends on interactions between the members and the performance is a complex function of the particular subset chosen, in certain cases near-optimal teams can be formed by looking at individual performance scores. This work assumes the ability to perform many tests upfront, while we focus on the case of online optimization.

Our work on simultaneous learning and optimization of labor platforms has clear ties to Multi-Armed Bandit (MAB) problems \cite{bubeck2012regret} and indeed Johari et al. \cite{johari2016know} use MABs to match single workers (with quality uncertainty) to known job types (without uncertainty), though this work does not deal with the complex interdependency in the worker population when teams of workers are matched. Our work has closer ties to combinatorial bandits \cite{cesa2012combinatorial} and semi-bandits \cite{kveton2015tight}. In combinatorial bandits, there are finite possible basic actions each giving a stochastic reward and at each time one chooses a subset of these basic actions and receives the sum of the rewards for the subset chosen. To apply this framework to our problem, we can model each team as a basic action and at each time step we choose a subset that corresponds to a partition of the workers into teams. Existing regret bounds either assume independence between basic actions \cite{combes2015combinatorial}, which our setting violates, or allow for correlations \cite{kveton2015tight} but do not fully leverage the additional knowledge gained from how these basic actions are related, e.g. how the performance of two teams are related if they share team members. In this way, the teamwork setting we consider induces additional structure that can be leveraged to minimize regret. 

Another line of related work within the MAB literature is \cite{simchowitz2016best}, which considers the standard multi-armed bandit setting with $N$ arms, but here the decision at each step is to choose $K$ out of these $N$ arms, and only the highest reward out of these arms is revealed. The goal is to learn the best subset containing $K$ arms. Although there are similarities with our ``Strongest Link'' payoff structure, this setup models the problem of determining the best team for a single task at each time step and does not model a labor platform which must fill numerous jobs simultaneously.

\section{The Model}\label{sec:model}
We now describe our model formally.
\begin{enumerate}[leftmargin=*]
\item {\bf Workers and Types:} Suppose that we have a single job type and we have $N$ workers denoted by the set $\mathcal{N}\triangleq \{1, \ldots, N\}$. We assume that $N$ is even. Each job requires two workers to complete. Each worker has type $\theta_i \in \{0,1\}$. We assume that the $\theta_i$ are i.i.d. $\textup{Bernoulli}\,(p)$ random variables. For convenience, we denote $q=1-p$. The type of the worker represents the skill of the worker at the given job. We sometimes refer to workers of type $1$ as high quality workers and those of type $0$ as low quality workers. Let $N_1$ be the number of 1 workers in the population, which is distributed as $\textup{Binomial}\,(N,p)$. 
We assume that the platform knows $p$, but the specific type of each worker is unknown. 
We are interested in the regime where the number of workers is large, i.e., we allow $N$ to scale to infinity while the expected proportion of 1 workers in the worker pool, $p$, remains constant. 

\item {\bf Decisions and System Dynamics:} Matching decisions are made at times $t=0,1,\ldots$. The workers enter the system at time $t=0$ and the platform has no information about them except for the prior $p$. Workers stay in the system indefinitely. At each time, the platform creates a partition of the worker pool into pairs of size 2. Let $\Psi_{N}$ denote the set of all possible pairings. Let $A(t) = \{(i,j)\}\in \Psi_N$ denote the pairings of the workers at time $t$. Each team $(i,j)$ is assigned to a distinct job (jobs are assumed to be abundant) and receives a payoff at the end of the period, which is perfectly observed by the platform. 

\item {\bf Feedback model:} We consider two different feedback models.
In each of the two models, the score that a pair $(i,j)$ receives upon completion is a deterministic function of the worker types: 
\begin{enumerate}
\item Weakest Link: The score received by a pair $(i,j)$ is $P(\theta_i, \theta_j) = \min\{\theta_i, \theta_j\}$;  
\item Strongest Link: The score received by a pair $(i,j)$ is $P(\theta_i,\theta_j) = \max \{\theta_i, \theta_j\}$. 
\end{enumerate} 
Let $B(t) = \{ ((i, j), P(\theta_i, \theta_j) )\}$ denote the pairings and the payoffs observed for each pairing at time $t$.  

\item {\bf Policies:} In any period, the platform has access to the history of all previous pairings made and scores observed. 
A policy $\phi$ for the platform is a sequence of mappings, indexed by time, from the history $H(t) = \{ B(0), \ldots, B(t-1) \}$ (where the initial history is defined to be $H(0) = \emptyset$) to the next action $A(t)$. We denote the set of all policies for the setting with $N$ workers by $\Phi_N$. 
The total payoff generated for the platform at time step $t$ is $\sum_{(i,j) \in A(t)} P(\theta_i, \theta_j)$.



\item {\bf Objective:} 
Let $\bm{\theta}=(\theta_j\in\{0,1\}; j=1,\cdots, N)$ be a type assignment across workers and let $\Theta_N$ be the set of all such type assignments. The distribution of the type of each worker induces a distribution on the type assignments in $\Theta_N$. All the expectations in the remainder of the paper are defined with respect to this distribution.
The performance of a policy under type uncertainty can be compared to the performance of the optimal matching policy when the identities of the high quality workers are known.
For a fixed policy $\phi$ and a type assignment $\bm{\theta}$, we define the $T$ period per-worker regret as,

$$\textup{Regret}_N(\bm{\theta},\phi, T)=\frac{T}{N} \max_{A\in \Psi_N}\sum_{(i,j)\in A}P(\theta_i, \theta_j) - \frac{1}{N} \sum_{t=1}^T \sum_{(i,j) \in A(t)} P(\theta_i, \theta_j).$$
Note that this is a deterministic quantity given the type assignments $\bm{\theta}$. Since the workers are expected to stay in the platform indefinitely, the platform will eventually learn every worker's type and not incur any additional regret thereafter.  For instance, this can be achieved in $N-1$ stages by matching every worker to every other worker. To see this, simply enumerate all $N(N-1)/2$ pairs, observe that $N/2$ pairs can be covered in a single stage; thus everyone will be matched to everyone else in $N-1$ stages. 

Define $\tau$ to be the (random) time taken till no additional regret is incurred under a given policy $\phi$.
The platform then seeks to minimize the expected regret until $\tau$. Defining
\begin{align*}
\textup{Regret}_N(\phi)\triangleq \textup{E}[\textup{Regret}_N(\bm{\theta},\phi, \tau)],
\end{align*}
where the expectation is over the randomness in $\theta$, the platform seeks to solve the optimization problem:
$$\inf_{\phi \in \Phi}\textup{Regret}_N(\phi).$$
In fact, we will show that there exist good policies such that $E(\tau)=\textup{o}(N)$ time steps.

\end{enumerate}
\subsection{A sufficient statistic for the state: A collection of graphs}
In general, a matching policy could depend on the history of matches and rewards, but this state space quickly becomes intractable. We need a sufficient statistic that only preserves information relevant to the decision problem. At first glance we might believe that it is sufficient to simply preserve the marginal posterior distribution of each worker's type. Further consideration would reveal that this is not so and that we need the \emph{joint} posterior distribution of the types of all workers, which is a complex object. We nevertheless make the following observations that considerably simplify the analysis: 1) Once a worker's type has been identified, one does not need to preserve information about the worker's previous matchings and 2) The joint posterior distribution does not depend on the order in which matches were made. 

These two observations allow us to represent the state space as a collection of graphs. We describe this space in the Weakest Link model. Let us define the \emph{unknown worker graph} $G_t = \{V_t, E_t\}$ as follows:
\begin{itemize}
\item $V_t = \{ i\in\mathcal{N} \ \mid \ \Pr(\theta_i = 1| H(t)) \not\in \{0,1\} \ \}$
\item $(i, j) \in E_t$ if $(i,j) \in A(0) \cup \ldots \cup A(t-1)$ and $P(\theta_i,\theta_j) = 0$. 
\end{itemize}
The vertex set $V_t$ represents the set of unknown workers at time $t$ and an edge exists between two vertices if the two workers have previously been matched and received a reward of 0. A similar description can be done in the Strongest Link model also, except that an edge exists between two vertices if the two workers have previously been matched and received a reward of 1. 

In either setting, we can compress the history $H(t)$ into the statistic $(KH_t, KL_t, G_t)$:
\begin{equation}\label{eq:min-graph}
\begin{gathered}
KH_t = \{i\in\mathcal{N} \ | \  \Pr(\theta_i = 1|H(t)) = 1\} \\
KL_t = \{ i\in\mathcal{N} \ | \ \Pr(\theta_i=1|H(t)) = 0\} \\
G_t = \{V_t, E_t\}
\end{gathered}
\end{equation}
%
We can then show the following.
\begin{lemma}\label{lma:graph}
In the Weakest Link (and the Strongest Link) model, there is no loss in objective if we restrict ourselves to policies that depend only on $(KH_t,KL_t,G_t)$ in each period $t$.
\end{lemma}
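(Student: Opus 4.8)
The plan is to exhibit $s_t := (KH_t, KL_t, G_t)$ as the state of a finite controlled Markov chain whose per-period cost is the per-period regret — which is non-negative — and then to invoke the classical fact that for minimizing expected total non-negative cost over an infinite horizon, Markov policies are as good as history-dependent ones. I argue for the Weakest Link model; the Strongest Link argument is identical after exchanging the roles of the labels $0$ and $1$.

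\emph{Step 1: the posterior of $\bm{\theta}$ given $H(t)$ is a deterministic function of $s_t$.} In the Weakest Link model, a recorded outcome ``pair $(i,j)$, payoff $1$'' is the event $\{\theta_i=\theta_j=1\}$, and ``pair $(i,j)$, payoff $0$'' is the event $\{\theta_i\theta_j=0\}$. Since the action at any time is a function of $H(t)$ alone (plus, at worst, exogenous randomization independent of $\bm{\theta}$) and payoffs are a deterministic function of types and the action, Bayes' rule — with the policy-dependent factors cancelling — gives that $\Pr(\bm{\theta}\in\cdot\mid H(t))$ is the product prior $\textup{Bernoulli}(p)^{\otimes N}$ conditioned on the intersection $\mathcal{E}(H(t))$ of all the recorded events; commutativity of this intersection is exactly the ``order of matches does not matter'' observation. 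I would then check that $\mathcal{E}(H(t))=\mathcal{E}(s_t)$: the coordinates forced to $1$ (resp.\ $0$) by $\mathcal{E}(H(t))$ are by definition $KH_t$ (resp.\ $KL_t$), and new forced coordinates arise only from payoff-$1$ pairs (both endpoints forced to $1$) and from payoff-$0$ pairs with an endpoint already forced to $1$ (the other forced to $0$, and recorded in $KL_t$); every remaining recorded constraint is either subsumed by $KH_t\cup KL_t$ or of the form ``$\theta_i\theta_j=0$'' for an edge $(i,j)\in E_t$ with both endpoints in $V_t$. Hence the posterior depends on $H(t)$ only through $s_t$; in particular this justifies ``forgetting'' the past matches of an identified worker.

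\emph{Step 2: $(s_t)$ is a controlled Markov chain and the regret is its expected total non-negative cost.} Given $s_t$ and $A(t)\in\Psi_N$, the joint law of the observed payoffs $B(t)$ is determined by the posterior of Step 1, and $s_{t+1}$ is a deterministic function of $(s_t,A(t),B(t))$ — add a payoff-$1$ pair to $KH$, add a payoff-$0$ pair as an edge, then propagate (a payoff-$0$ edge incident to a $KH$-worker pins the other endpoint into $KL$) — so $(s_t)$ is a controlled Markov chain. Writing $r(t)=\frac1N\big(\max_{A'\in\Psi_N}\sum_{(i,j)\in A'}P(\theta_i,\theta_j)-\sum_{(i,j)\in A(t)}P(\theta_i,\theta_j)\big)\ge 0$ for the per-period regret, its conditional expectation given $(s_t,A(t))$ is a deterministic function $c(s_t,A(t))\ge 0$, since both the max over $A'$ and the realized payoff of $A(t)$ are functions of $\bm{\theta}$ whose conditional law is that of Step 1. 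By the definition of $\tau$ we have $r(t)=0$ for $t>\tau$, so $\textup{Regret}_N(\bm{\theta},\phi,\tau)=\sum_{t\ge 1}r(t)$ (this also correctly assigns the value $+\infty$ to any policy with $\tau=\infty$), and the tower property gives $\textup{Regret}_N(\phi)=\EE_\phi\big[\sum_{t\ge 1}c(s_t,A(t))\big]$, which depends on $\phi$ only through the law it induces on $(s_t,A(t))_{t\ge 1}$.

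\emph{Step 3, and the main obstacle.} For fixed $N$ the state space $\{(KH,KL,G)\}$ and the action space $\Psi_N$ are finite, the costs are non-negative, $\inf_{\phi\in\Phi_N}\textup{Regret}_N(\phi)<\infty$, and a proper policy exists (e.g.\ matching every worker to every other worker over $N-1$ stages and then playing an optimal partition reaches, surely and within $N-1$ steps, a state from which no further regret is incurred). This is precisely the setting of negative dynamic programming / the stochastic shortest path problem, for which it is standard (see, e.g., references on Markov decision processes such as Puterman or Bertsekas--Tsitsiklis) that randomized history-dependent policies do no better than Markov policies, and in fact that an optimal stationary deterministic policy measurable with respect to $s_t$ exists; since any $s_t$-measurable policy lies in $\Phi_N$ ($s_t$ being a function of $H(t)$), the infimum is unchanged and the lemma follows. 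The routine-but-care-requiring part is this last step: one must verify that the $\tau$-truncated objective genuinely fits an infinite-horizon non-negative-cost MDP — which is why it is cleanest to eliminate $\tau$ in Step 2 in favour of the non-negative running cost $r(t)$, uniformly over policies — and invoke the precise theorem upgrading history-dependent optimality to Markov optimality; Steps 1--2 are essentially bookkeeping with Bayes' rule and the state-update map.
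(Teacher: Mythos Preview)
Your proposal is correct, and its Step 1 is essentially what the paper does: the paper argues that the full history can be encoded as an edge-labeled graph on all workers (losing only order and repetition, which do not affect the posterior), and then that edges incident to identified workers can be dropped without changing any posterior, leaving exactly $(KH_t,KL_t,G_t)$. Where you differ is in Steps 2--3: the paper stops once it has established that $(KH_t,KL_t,G_t)$ is a sufficient statistic for the posterior of $\bm{\theta}$, leaving implicit the passage from ``sufficient statistic'' to ``no loss in the regret objective.'' You make that passage explicit by casting the problem as a finite-state controlled Markov chain with non-negative per-period cost, replacing the $\tau$-truncated regret by the infinite-horizon sum $\sum_{t\ge 1}r(t)$, and then invoking the standard negative-dynamic-programming / stochastic-shortest-path result that Markov (indeed stationary deterministic) policies achieve the infimum. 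This buys you a genuinely complete argument---the paper's proof, read literally, proves only that the statistic determines the posterior, not that restricting to $s_t$-measurable policies preserves the optimum---at the modest cost of an external MDP reference and the bookkeeping needed to check its hypotheses (finiteness, non-negative costs, existence of a proper policy), all of which you handle correctly.
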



%
%

\section{Weakest Link}\label{sec:min}
In this section, we describe our policies and results for the Weakest Link model. Before we do so, a brief discussion on the aspects of regret and learning is in order.

{\bf Regret in the Weakest Link Model.} In the Weakest Link model, recall that for any pair of workers $(i,j)$ with $\theta_i, \theta_j \in \{0,1\}$, the reward observed for this pair is $P(\theta_i,\theta_j) = \min\{\theta_i, \theta_j\}$ and thus a positive reward is generated only when $\theta_i = \theta_j = 1$. Thus, if the worker qualities are known, then the optimal matching strategy is to match the 1 workers amongst themselves. In the case where the number of 1 workers is even, every time a 1 worker is matched to a 0 worker, the platform incurs a (per-worker) regret of $1/2$. Thus minimizing regret amounts to minimizing the number of $0-1$ matches. When the number of 1 workers is odd, this is not precisely true, since one $0-1$ match is inevitable in each stage. But nevertheless, this is a good enough approximation to the objective when $N$ is large since we will be considering policies that incur no additional regret after in an expected $o(N)$ time steps. This is formalized in the following lemma.

\begin{lemma}\label{lma:weak-max-regret}
Consider a policy $\phi$ such that $E(\tau)\leq\textup{o}(N)$. Define $\textup{Regret}'_N(\bm{\theta},\phi, \tau)$ to be $1/(2N)$ times the total number of matches between high and low type workers under the policy until $\tau$. Let $\textup{Regret}'_N(\phi)\triangleq\textup{E}[\textup{Regret}'_N(\bm{\theta},\phi, \tau)]$. Then $|\textup{Regret}'_N(\phi)-\textup{Regret}_N(\phi)|\leq \textup{o}(1)$.
\end{lemma}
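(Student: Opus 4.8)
The plan is to show that the difference between $\textup{Regret}_N(\phi)$ and $\textup{Regret}'_N(\phi)$ comes entirely from the parity correction: when $N_1$ is even, every stage's optimal matching pairs all $1$ workers together, so each $0$-$1$ match contributes exactly $1/2$ to per-stage per-worker regret and the two quantities agree exactly along each sample path; when $N_1$ is odd, the offline optimum must also leave one $1$ worker matched with a $0$ worker in every stage, so $\textup{Regret}_N$ undercounts relative to $\textup{Regret}'_N$ by at most one ``unavoidable'' $0$-$1$ match per stage up to time $\tau$.

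Concretely, first I would fix a type assignment $\bm\theta$ and let $N_1$ be the number of $1$ workers. Compute the offline optimum $\max_{A\in\Psi_N}\sum_{(i,j)\in A} \min\{\theta_i,\theta_j\}$: it equals $\lfloor N_1/2\rfloor$, achieved by pairing $1$'s with $1$'s as much as possible. For a policy $\phi$, write $m(t)$ for the number of $0$-$1$ matches made at stage $t$ and $h(t)$ for the number of $1$-$1$ matches; then $2h(t)+m(t)=N_1$ (when $m(t)$ counts only edges with exactly one $1$ endpoint, every $1$ worker is in a $1$-$1$ or a $0$-$1$ pair), so the stage-$t$ realized payoff is $h(t) = (N_1 - m(t))/2$. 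Hence the per-worker regret incurred at stage $t$ (for $t\le\tau$; it is zero afterward) is
\begin{equation*}
\frac{1}{N}\Bigl(\Bigl\lfloor \tfrac{N_1}{2}\Bigr\rfloor - \tfrac{N_1 - m(t)}{2}\Bigr) = \frac{m(t)}{2N} - \frac{1}{N}\Bigl(\tfrac{N_1}{2} - \Bigl\lfloor\tfrac{N_1}{2}\Bigr\rfloor\Bigr) = \frac{m(t)}{2N} - \frac{\mathbbm{1}\{N_1 \text{ odd}\}}{2N}.
\end{equation*}
Summing over $t=1,\ldots,\tau$ gives $\textup{Regret}_N(\bm\theta,\phi,\tau) = \textup{Regret}'_N(\bm\theta,\phi,\tau) - \tau\,\mathbbm{1}\{N_1\text{ odd}\}/(2N)$, since $\textup{Regret}'_N(\bm\theta,\phi,\tau) = \sum_{t\le\tau} m(t)/(2N)$ by definition.

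Taking expectations, $|\textup{Regret}_N(\phi) - \textup{Regret}'_N(\phi)| \le \textup{E}[\tau\,\mathbbm{1}\{N_1\text{ odd}\}]/(2N) \le \textup{E}[\tau]/(2N) = \textup{o}(N)/(2N) = \textup{o}(1)$, using the hypothesis $\textup{E}(\tau)=\textup{o}(N)$ directly. (One could even get a sharper bound of order $1/\sqrt N$ by noting $\Pr(N_1\text{ odd})\to 1/2$ and $\tau$ concentrates, but the crude bound already suffices.) The only genuinely delicate point is the bookkeeping identity $2h(t)+m(t)=N_1$ together with the claim that the offline optimum is attained in \emph{every} stage by the same fixed matching of $1$'s; this is immediate here because payoffs are deterministic functions of types and the optimum is computed per stage, so there is no subtlety — the main ``obstacle'' is merely being careful that $\tau$ is a random time and that the per-path identity holds for all $t\le\tau$ before taking expectations, which the argument above handles cleanly.
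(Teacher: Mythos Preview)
Your proof is correct and follows essentially the same approach as the paper: both arguments hinge on the parity of $N_1$, observing that when $N_1$ is even the two regret notions coincide pathwise, and when $N_1$ is odd the discrepancy is exactly one inevitable $0$--$1$ match per stage, contributing at most $\textup{E}[\tau]/(2N)=\textup{o}(1)$. Your version is simply more explicit, carrying out the bookkeeping identity $2h(t)+m(t)=N_1$ and the offline optimum $\lfloor N_1/2\rfloor$ algebraically where the paper argues verbally; the one throwaway remark about a sharper $1/\sqrt{N}$ bound is not quite right (the parity event has probability $\approx 1/2$, which only saves a constant factor), but it is irrelevant to the main argument.
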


{\bf Learning in the Weakest Link Model.} Under the Weakest Link model, a 1 worker gets identified when he is matched to other 1 worker, either known or unknown, and a feedback of $1$ is observed. While 0 workers are identified when they get matched to another worker who is known to be a 1 or is later identified as being 1. Thus matching of a worker with another 1 worker is critical to her quality identification.

{\bf Inevitability of Regret.} Thus, informally, the goal of any optimal algorithm is to speedup the identification of all 1 workers, while minimizing the number of $0-1$ matches in the process. But the identification of unknown 1 workers requires pairing them with other 1 workers, and this in turn inevitably exposes unknown 0 workers to matches with 1 workers, thus incurring regret. 

The two central questions in this setting are: 1) How should the workers with unknown quality be matched amongst themselves? And 2) Should the identified high quality workers be matched amongst themselves or should they be matched to unknown workers? In order to address the two questions separately, we define the following class of policies.

\begin{definition}[A non-learning policy]
A policy is called \emph{non-learning} if it always pairs workers known to have type $\theta_j = 1$ with each other and not with workers whose types are still unknown to the system. 
\end{definition}
First note that feasible non-learning policies exist, because when unknown workers are matched with each other, 1 workers are always identified in pairs and so we do not have a case when known 1 workers are forced to be matched to unknown workers. To define such a non-learning policy, we need only to specify how workers whose qualities are unknown are matched amongst themselves. In other words, we can define the policy on the unknown graph, where vertices are unknown workers and edges indicate previous matchings with a reward of 0. 

This class of non-learning policies may seem attractive to the platform, as it guarantees that proven high quality workers will have positive future experiences and the platform may not wish to expose good workers to possible negative experiences. These policies are also myopic and maximize the number of certain high-quality pairs at each time step. We will show, however, that these policies are not always optimal from the perspective of the platform in the long run. 

We first show the following lower bound on the expected regret of any non-learning policy.

\begin{proposition}[A Lower bound for any non-learning policy]\label{thm:lbeven}
For any $N$ and $p$ and for any non-learning policy $\phi$ such that $E(\tau)\leq\textup{o}(N)$, we have
$$ \textup{Regret}_N(\phi)\geq \frac{3}{4}(1-p)-\textup{o}(1).$$
\end{proposition}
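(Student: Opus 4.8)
The plan is to lower-bound the expected regret of an arbitrary non-learning policy by tracking the number of high-low (i.e.\ $0$-$1$) matches it must incur, using Lemma~\ref{lma:weak-max-regret} to replace $\textup{Regret}_N(\phi)$ by $\textup{Regret}'_N(\phi)$ up to $\textup{o}(1)$. Since the policy is non-learning, all known $1$ workers are matched among themselves, so every $0$-$1$ match arises inside the unknown-worker graph $G_t$; moreover the only way a pair of unknown $1$ workers can ever be \emph{identified} is by being matched to each other in some stage, and a $1$ worker who is matched to an unknown $0$ worker is \emph{not} identified that round but contributes one $0$-$1$ match. The key is to argue that, in expectation, identifying the $1$ workers by self-matching in the unknown pool forces a constant fraction of $0$ workers to be ``wasted'' on $1$ partners along the way.

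The main step is a careful accounting at the level of a single round of matching among the unknown workers. Consider any stage in which the unknown pool still contains some $1$ workers. Because the policy cannot distinguish $0$'s from $1$'s among the unknowns (conditioned on the history compressed to $G_t$, each unmatched-in-$G_t$ worker's type is still $\textup{Bernoulli}$ with the appropriate posterior, and crucially two unknown workers joined by an edge in $G_t$ cannot both be $1$), when it picks a pair $(i,j)$ of unknown workers with no edge between them, there are three outcomes: both are $1$ (payoff $1$, both get identified, no regret), exactly one is $1$ (payoff $0$, one $0$-$1$ match, the $1$ stays unknown), or both are $0$ (payoff $0$, no regret, an edge is added). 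I would compute, using $p$ and the posterior structure, the expected number of $0$-$1$ matches generated \emph{per $1$ worker eventually identified}: roughly, to identify a given $1$ worker you match it repeatedly until it lands on another $1$, and each unsuccessful match that pairs it with a $0$ costs a $0$-$1$ match. Pairing this with the expected number of $1$ workers, $pN$, and the number of $0$ workers, $qN$, and the constraint that in the end all but $\textup{o}(N)$ workers are identified, should yield that the expected total number of $0$-$1$ matches is at least $\bigl(\tfrac{3}{2}q - \textup{o}(1)\bigr)N$, hence $\textup{Regret}'_N(\phi)\geq \tfrac34 q - \textup{o}(1)$; the constant $3/4$ presumably comes from a worst-case-over-policies bound on this per-identification cost, with the factor $1/2$ from the per-worker normalization in the definition of regret.

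The hard part will be making the per-round conditional argument rigorous despite the adaptivity of $\phi$: the policy chooses which unknown pairs to match based on $G_t$, so the posterior probability that a chosen vertex is a $1$ depends on its degree and component structure in $G_t$, and these are not independent across the pool. I would handle this by exploiting the observation (already in the excerpt) that the joint posterior is order-independent and that an edge in $G_t$ certifies at least one endpoint is $0$; this lets me set up a supermartingale or a direct expectation bound on the quantity (number of remaining unknown $1$ workers) $+$ (a suitable multiple of the accumulated $0$-$1$ matches), showing it cannot decrease to $0$ without paying the claimed cost in expectation. A secondary technical point is controlling the discrepancy between ``number of $1$ workers is even'' and ``odd'' and the $\textup{o}(N)$ slack allowed by $E(\tau)=\textup{o}(N)$; these only affect lower-order terms and get absorbed into the $\textup{o}(1)$, via Lemma~\ref{lma:weak-max-regret} and concentration of $N_1$ around $pN$.
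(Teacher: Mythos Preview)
Your proposal correctly identifies the reduction via Lemma~\ref{lma:weak-max-regret} to counting $0$--$1$ matches, and correctly observes that in a non-learning policy every $0$--$1$ match occurs inside the unknown pool. But the core of your argument---the per-round accounting and the proposed supermartingale on (remaining unknown $1$'s) plus (a multiple of accumulated $0$--$1$ matches)---is where the real work lies, and you have not actually carried it out; indeed you write that the constant $3/4$ ``presumably'' arises from a worst-case bound, which signals that you do not yet have a mechanism producing it. The difficulty is exactly what you flag: the policy is adaptive, the posterior on each unknown vertex depends on the entire component structure of $G_t$, and there is no obvious single-step potential that forces $3/2$ high-type matches per $0$-worker uniformly over all non-learning policies. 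Framing the count from the $1$-worker's side (``match it until it hits another $1$'') does not help either, since a lucky policy could pair two $1$'s immediately with zero wasted matches; the lower bound must come from averaging over type assignments, not from a per-trajectory argument.

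The paper's proof takes a completely different route that sidesteps the dynamics. It conditions on $N_1$, fixes a single low-type worker $l$, and uses a \emph{swapping} (bijection) argument on type assignments: (i) by swapping $l$ with each of the $N_1-1$ eventually-identified high types, one shows $l$ is matched to at least one high type with probability at least $(N_1-1)/N_1$; (ii) more importantly, if in some assignment $l$ is matched to exactly one high type $h$ who is later identified, then swapping the labels of $l$ and $h$ yields an equiprobable assignment in which the worker now at $l$'s position is matched to at least two high types (first $h$, then whatever high type $h$ was subsequently identified against), and this map is shown to be injective. This yields $P(l\text{ matched to }\geq 2\text{ highs})\geq P(l\text{ matched to exactly }1\text{ high})$, so the expected number of highs matched to $l$ is at least $3/2$, up to a $1/N_1$ term and an odd-$N_1$ correction $g_{N_1}$ that are both $\textup{o}(1)$. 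Summing over the $N-N_1$ low types and normalizing by $2N$ gives the bound. The key idea you are missing is this label-swapping symmetry, which exploits the i.i.d.\ prior and the policy's inability to distinguish $l$ from $h$ until their outcome histories diverge; your martingale sketch does not capture it, and without it the sharp constant $3/4$ does not emerge.
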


Next, we define two non-learning policies, a meta-policy Exponential Cliques and its practical implementation called $k$-stopped Exponential Cliques, and we will show the latter to be optimal among the class of non-learning policies for all $p$ when $N$ is large. Moreover, we will later show that it is asymptotically optimal among all policies, learning and non-learning, for our regret minimization problem as $p\rightarrow 1$. \\

\noindent\textbf{Exponential Cliques: }
As this is a non-learning policy, all known 1 workers are paired with each other and so we restrict our attention to the unknown graph. The algorithm is carried out in epochs and in each epoch we pair two cliques of unknown workers and test all pairwise matches between the two cliques. 
\begin{enumerate}[leftmargin=*]
\item At the first epoch $k=0$ (at time $t=0$), each group is an individual worker (vertex in $G_0$) and we pair workers at random and observe the feedback. Workers who are identified as 1 are removed from the unknown graph and the remaining vertices form a clique of size 2. 
\item In the remaining epochs $k=1, 2, \ldots$ the unknown graph consists of cliques of size $2^k$ at the start of the epoch. We pair cliques together and use the next $2^k$ time steps to make all pairwise matches between the workers in the two cliques. 
\item We stop when all worker types have been identified, or the graph consists of one clique.
\end{enumerate}
If at the start of an epoch $i$ there is an unpaired clique $C$, which has size $2^i$, then the workers in this clique will be matched among themselves until the next epoch $j$ when there is an unpaired clique $C'$, which has size $2^j$. In the $j$th epoch, we create all pairwise matches between $C$ and $C'$, which will take time $2^j$, and then repeat existing matches for the remainder of the epoch. \\

\begin{figure}[h]
\begin{minipage}[t]{.45\textwidth}
\includegraphics[height=1.2in]{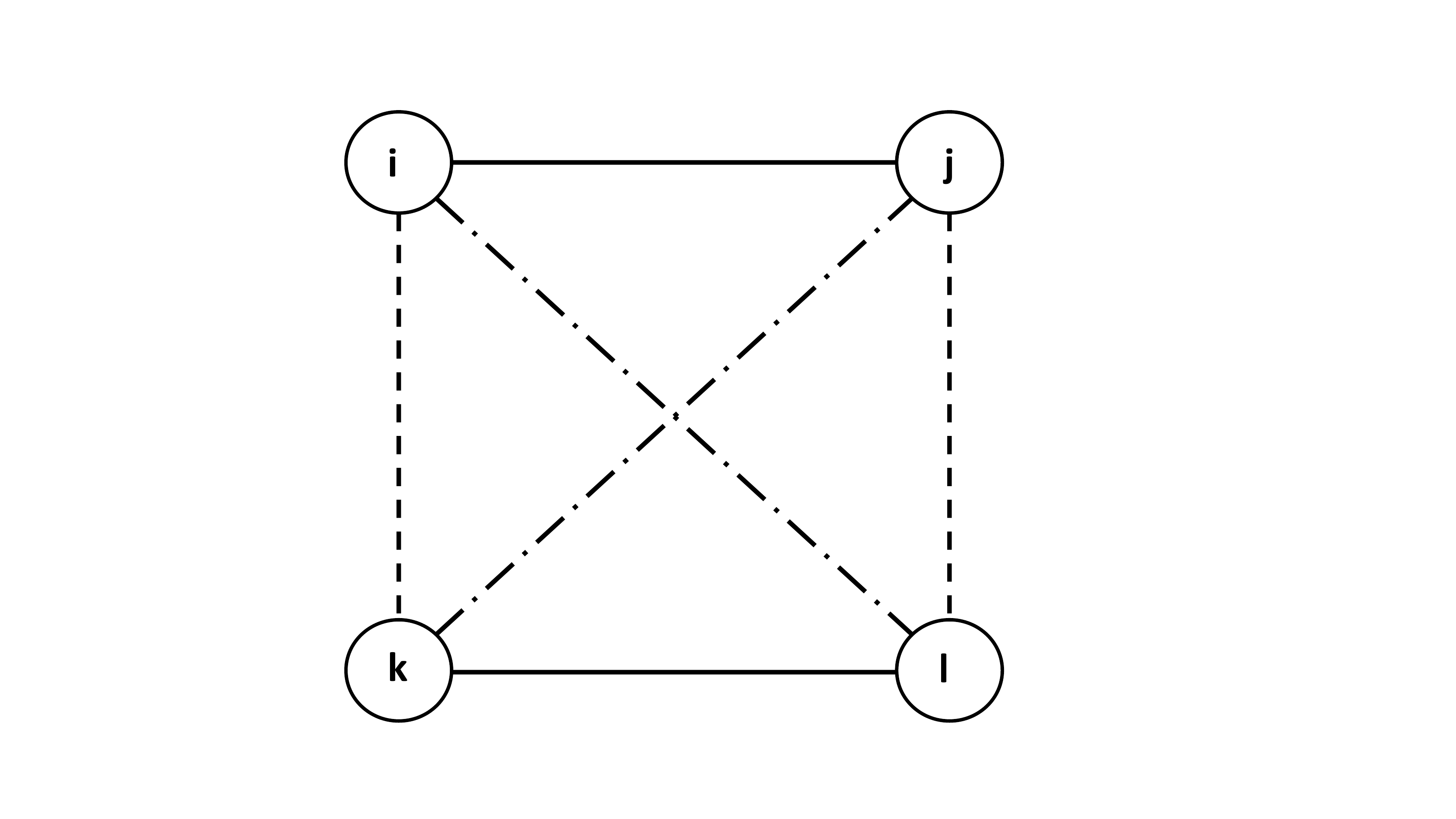}
\caption{{\small Exponential cliques at the beginning of the second epoch. Solid line represents matches that have resulted in an outcome of $0$, dotted line represents proposed next match, and dash-dotted line represents the match after if none of the dotted matches lead to an outcome of $1$.}}\label{fig:expclique1}
\end{minipage}\hfill
\begin{minipage}[t]{.45\textwidth}
\includegraphics[height=1.5in,angle=0]{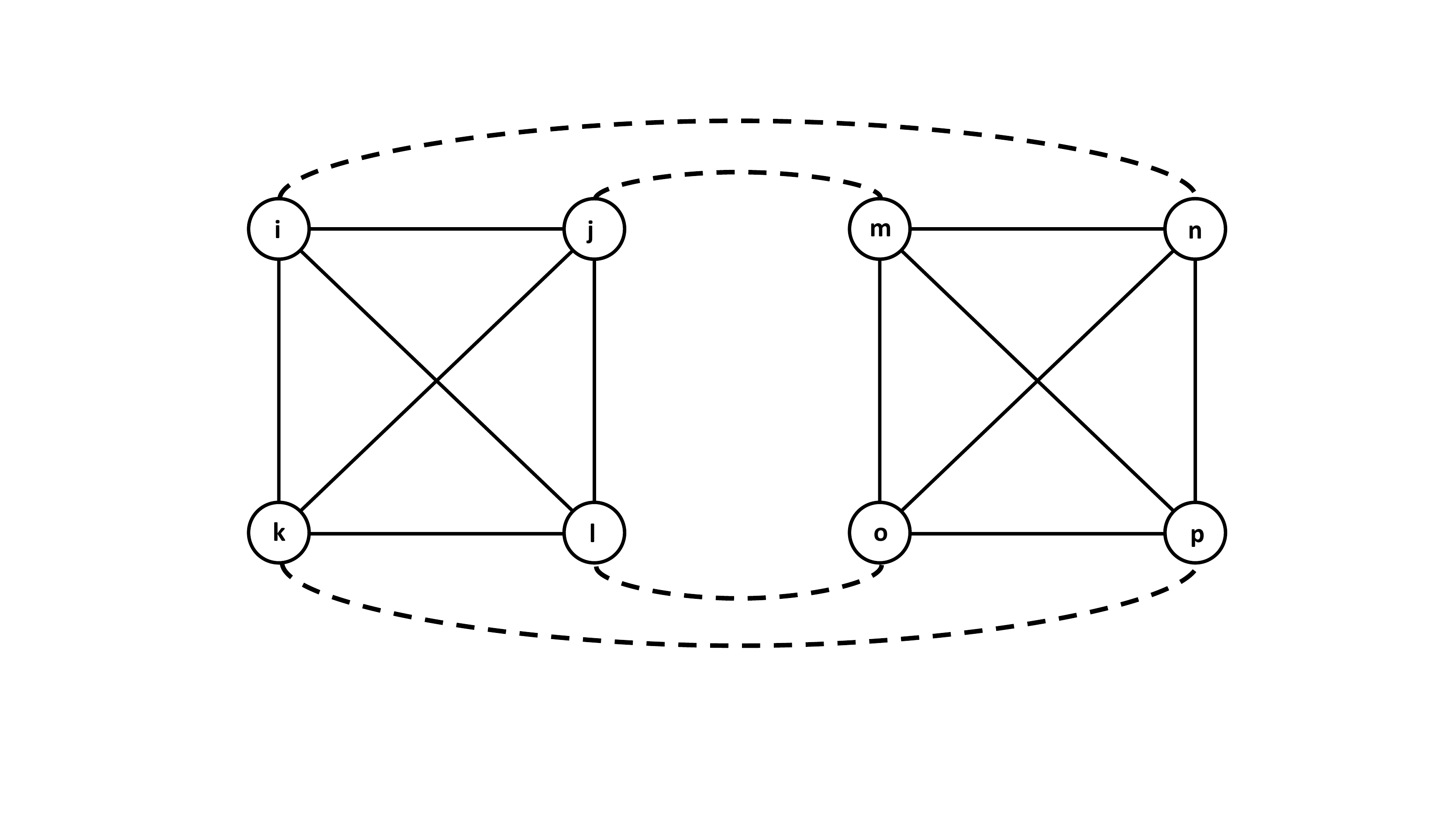}
\caption{{\small Exponential cliques at the beginning of the third epoch. Solid line represents matches that have resulted in an outcome of $0$, dotted line represents proposed next match.}}
\label{fig:expclique2}
\end{minipage}
\end{figure}

Note that in this algorithm, a worker's type is identified if and only if two cliques $C_1, C_2$, each with exactly one 1 worker, $w_1, w_2$, respectively, are paired together \emph{and} the pairing between $w_1$ and $w_2$ occurs. The fact that $\theta_{w_1}=1$ now tells us that all other workers in $C_1$ were 0. Similarly we can conclude that all workers in $C_2$ are also 0. If we do not observe a matching between such a $w_1$ and $w_2$, then at the end of the epoch $C_1$ and $C_2$ are combined to form a clique of twice the size. This algorithm maintains the invariant that at the beginning of every epoch $k$, all workers in the unknown worker graph are in a clique of size $2^k$, except for possibly workers in the unpaired clique. Since the workers in this unpaired clique are repeating matches among themselves, the platform is not learning any information about this clique and so these workers may be incurring more regret. As the number of epochs increases, the number of workers in this unpaired clique grows exponentially. To curtail the effects that this clique might create in terms of regret, we define the following algorithm that stops Exponential Cliques after $k$ epochs.

\noindent\textbf{k-stopped Exponential Cliques: } Run Exponential Cliques for $k$ epochs. After $k$ epochs, enumerate all matches remaining create remaining matches as fast as possible.

If there are $j$ unknown workers remaining when we stop Exponential Cliques, we can complete all remaining matches in at most $j$ time steps.

We have the following upper bound on the performance of this algorithm.
\begin{proposition}[Upper bound on expected regret of  k-Stopped Exponential Cliques]\label{thm:missing}
Let $\phi$ be the $\floor{\sqrt{2\log(N)}}$ stopped Exponential Cliques Algorithm. Then, 
$$ \textup{Regret}_N(\phi) \leq \frac{3}{4}(1-p) + \textup{o}(1)$$ 
and moreover, under the Exponential Cliques Algorithm, $E(\tau) =\textup{o}(N)$.
\end{proposition}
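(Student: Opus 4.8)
The plan is to invoke Lemma~\ref{lma:weak-max-regret}: once we know $E(\tau)=\textup{o}(N)$ for the $\lfloor\sqrt{2\log N}\rfloor$-stopped algorithm, it suffices to bound $\tfrac{1}{2N}\,\E[M]$, where $M$ is the total number of $0$--$1$ matches made before $\tau$, and to show $\tfrac{1}{2N}\,\E[M]\le\tfrac34(1-p)+\textup{o}(1)$. So two things must be established: (i) $\E[M]\le\tfrac32 N(1-p)+\textup{o}(N)$, and (ii) $E(\tau)=\textup{o}(N)$ (for the stopped policy, and, by the same estimates, the stated $E(\tau)=\textup{o}(N)$ for unstopped Exponential Cliques).

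For (i) I would set up a charging scheme on the unknown-worker graph. After epoch $0$ this graph is a disjoint union of cliques, each either \emph{pure} (all constituent types $0$) or \emph{mixed} (exactly one type-$1$ worker), since a clique consists of workers all of whose pairwise matches returned $0$. The number of mixed cliques is $R_0$, the number of $0$--$1$ pairs formed at $t=0$; conditioning on $N_1\sim\textup{Binomial}(N,p)$ and averaging over the random initial matching gives $\E[R_0]=N p(1-p)$ exactly. Now fix a mixed clique $i$ and follow it: while unresolved it can only be matched to pure cliques (matching it to another mixed clique would reveal both type-$1$ workers), so it doubles in size each epoch; if it is resolved at epoch $k_i$, when it has size $S_i=2^{k_i}$ and is matched to a mixed clique $i'$, then in epochs $1,\dots,k_i-1$ its type-$1$ worker $w_i$ meets $2^1+\dots+2^{k_i-1}=S_i-2$ distinct type-$0$ workers, and in epoch $k_i$ the unique perfect matching that pairs $w_i$ with $w_{i'}$ occupies a position $r_i$ which — since the policy depends only on the graph (Lemma~\ref{lma:graph}), so $w_i$ is uniform among the $S_i$ members of its clique — is uniform on $\{1,\dots,S_i\}$, so $w_i$ meets $r_i-1$ further type-$0$ workers before the identification. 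Charging $(S_i-2)+(r_i-1)$ to clique $i$ (and symmetrically to $i'$), we get $M\le R_0+\sum_i\big[(S_i-2)+(r_i-1)\big]+\rho$, where the sum is over mixed cliques resolved by a mixed--mixed pairing and $\rho$ counts all remaining $0$--$1$ matches. Since $\E[(S_i-2)+(r_i-1)\mid S_i]=\tfrac32 S_i-\tfrac52$, and since the mixed cliques have pairwise disjoint vertex sets — each contributing one type-$1$ worker (total $R_0$) and absorbing distinct type-$0$ workers (total at most $N-N_1$), whence $\sum_i S_i\le R_0+(N-N_1)$ — we obtain $\E\big[\sum_i((S_i-2)+(r_i-1))\big]\le\tfrac32\big(Np(1-p)+N(1-p)\big)-\tfrac52 Np(1-p)+\textup{o}(N)=N(1-p)(\tfrac32-p)+\textup{o}(N)$, using from (ii) that all but $\textup{o}(N)$ of the $R_0$ mixed cliques are resolved this way. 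Adding $\E[R_0]=Np(1-p)$ and $\E[\rho]=\textup{o}(N)$ yields $\E[M]\le\tfrac32 N(1-p)+\textup{o}(N)$, i.e.\ $\textup{Regret}_N(\phi)\le\tfrac34(1-p)+\textup{o}(1)$.

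It remains to bound $\rho$ and to prove (ii); both reduce to controlling how fast Exponential Cliques terminates. For $\rho$ there are two sources: the workers of the \emph{unpaired} clique that appears whenever the number of cliques is odd (which waits $O(\sqrt{\log N})$ epochs in total, hence contributes at most $O(\sqrt{\log N})\cdot 2^{O(\sqrt{\log N})}=N^{\textup{o}(1)}$ matches), and the brute-force completion after epoch $\lfloor\sqrt{2\log N}\rfloor$ (at most the square of the number of still-unknown workers at that epoch, including the doubling-phase matches of any mixed clique not yet resolved). To make the latter $\textup{o}(N)$, I would show the number of mixed cliques collapses doubly-exponentially once pure cliques become scarce: writing $U_j,P_j$ for the numbers of mixed and pure cliques, a random-pairing computation gives $U_{j+1}\approx U_jP_j/(U_j+P_j)$ and $P_{j+1}\approx P_j^2/\big(2(U_j+P_j)\big)$, so after a constant number $j^*$ of ``pure-dominant'' epochs one has $U_{j^*+\ell}\lesssim U_0\,2^{-\binom{\ell+1}{2}}$, which is $N^{\textup{o}(1)}$ already by $\ell\approx\sqrt{2\log N}$ (this is exactly what calibrates the cutoff $\lfloor\sqrt{2\log N}\rfloor$). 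The same estimate gives (ii), since the runtime through epoch $K$ is $O(2^K)$ and $K\le\lfloor\sqrt{2\log N}\rfloor+N^{\textup{o}(1)}$. One also checks the process can always terminate without permanent regret: identifications of type-$1$ workers occur in pairs and $N_1\equiv R_0\pmod 2$, so no type-$1$ worker is ever stranded in a way that costs regret beyond the optimum.

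The main obstacle is this last step. The charging computation producing the constant $\tfrac34$ is essentially deterministic once the clique structure is granted, but showing that not too many mixed cliques survive many rounds of the label-oblivious random pairing requires a genuine concentration argument — a first-moment/Markov bound on $U_{j^*+\ell}$ only yields $E(\tau)=O(N)$, which is insufficient. I would obtain the needed tail bound by estimating at each epoch the probability that the few remaining mixed cliques all dodge one another under the random matching (a hypergeometric estimate) and union-bounding over the $O(\log N)$ epochs, so that $\sum_j 2^j\,\Pr(\text{not terminated by epoch }j)=\textup{o}(N)$ and, separately, $\E\big[(\text{unknowns remaining at epoch }\lfloor\sqrt{2\log N}\rfloor)^2\big]=\textup{o}(N)$.
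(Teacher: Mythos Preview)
Your charging scheme for the main term is correct and reaches the right constant: tracking each mixed clique, summing $(S_i-2)+(r_i-1)$ with $r_i$ uniform on $\{1,\dots,S_i\}$, and using $\sum_i S_i\le R_0+(N-N_1)$ does give $\E[M]\le\tfrac{3}{2}N(1-p)+\textup{o}(N)$. The paper arrives at the same $\tfrac34(1-p)$ by the dual, and shorter, route of fixing a single $0$-worker: it is matched to a $1$-worker exactly once upon first absorption into a mixed clique, and then with probability exactly $1/2$ once more in the epoch where two mixed cliques collide (by the same symmetry you invoke for $r_i$). This yields $3/2$ expected $0$--$1$ matches per $0$-worker directly, with no $S_i$ bookkeeping and no need to lower-bound the number of resolved cliques.

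Where you diverge more substantially is in the residual term. You propose to track $(U_j,P_j)$ via a mean-field recursion and then furnish concentration by hypergeometric tail bounds, and you rightly flag this as the crux. The paper avoids this entirely by an observation you do not use: conditional on the unknown-worker graph at epoch $k$, each clique is mixed \emph{independently} with probability $q_k=\tfrac{2^kp}{2^kp+1-p}$ (types are i.i.d.\ and cliques are vertex-disjoint). Hence the total clique count obeys the \emph{exact} recursion $D_{k}\mid D_{k-1}\sim\textup{Bin}(\lfloor D_{k-1}/2\rfloor,\,1-q_{k-1}^{\,2})$, and a two-line induction gives closed-form bounds on both $\E[D_k]$ and $\E[D_k^2]$. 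With $K=\lfloor\sqrt{2\log N}\rfloor$ and $Q_K=\prod_j(1-q_j^2)\le 2^{-K^2/2+c(p)K}$, one gets $\E[W_K^2]\le 2^{2K}\E[(D_K+1)^2]=\textup{o}(N)$ by arithmetic---no tail probabilities, no union bound.

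One small correction to your diagnosis: the first-moment bound $\E[D_K]\le(N/2^K)Q_K$ already gives $\E[W_K]=\textup{o}(N)$ and hence $\E[\tau]\le 2^{K+1}+\E[W_K]=\textup{o}(N)$; so ``first-moment yields only $\E[\tau]=O(N)$'' is not the issue. The second moment is needed only to bound the brute-force regret $\E[W_K^2]/(2N)$, and the exact binomial recursion delivers it without the machinery you anticipated.
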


And thus $\floor{\sqrt{2\log(N)}}$-Stopped Exponential Cliques is an asymptotically optimal (when $N$ is large) non-learning algorithm.

Intuitively, this upper bound can be explained as follows. Under exponential cliques, at each epoch we have two types of cliques: Type A consisting of all 0 workers and Type B consisting of a single 1 worker, the remaining workers being 0. Each 0 worker starts in a singleton clique of type A. There are two steps to each 0 worker being identified. It first becomes a part of a type B clique, and in the process gets matched to a 1 worker exactly once (suppose for a moment that there are no unpaired cliques at any epoch, so the 0 worker doesn't get matched to the same 1 worker again). In the second step, this type B clique gets matched to another type B clique. In this step, there are two possibilities: either the 0 worker gets matched to the lone 1 worker of the other clique, or before that happens, the two lone 1 workers in the two cliques get matched to each other, thus identifying everyone in the two cliques. In the first case, the 0 worker gets matched to a 1 exactly once more, while in the latter case it doesn't. The two possibilities are equiprobable. Thus in expectation, each 0 worker gets matched to $3/2$ 1 workers, leading to a regret of $3/4$ per 0 worker. Thus the total regret per worker is upper bounded by $3/4(1-p)$. Of course, this argument assumes that there are no unpaired cliques at any epoch. The proof shows that by the careful selection of the stopping time for exponential cliques, the contribution to regret from the unpaired cliques until the stopping time and the regret from the residual matches after the stopping time are $\textup{o}(1)$.

To summarize, we have characterized an algorithm that is optimal among the class of algorithms that does not pair known high quality workers with workers whose types are uncertain. We have not yet made any claims as to how well this class of algorithms compares to policies that pair known workers with unknown workers for the sake of learning. 





\subsection{Learning vs. non-learning policies in the regime $ p \rightarrow 1$}

Now we show a lower bound on the expected regret of any policy as $p\rightarrow 1$, which will then allow us to conclude that the optimal non-learning $\floor{\sqrt{2\log(N)}}$-stopped Exponential Cliques policy is indeed asymptotically optimal (in $p$) among all algorithms, non-learning and otherwise, in this regime.

\subsubsection{A Lower Bound on Expected Regret}
The following lower bound holds for all policies and comes from the fact that all algorithms, in expectation, must incur a substantial amount of regret in the first two time steps since worker types are initially unknown. 

\begin{proposition}\label{thm:lblargep}
For any $p$ and policy $\phi$, we have 
$$\textup{Regret}_N(\phi)  \geq \frac{1}{2} \cdot \frac{(1-p)(p+2p^2)}{1+p} + \textup{o}(1) .$$
\end{proposition}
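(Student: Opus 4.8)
The plan is to exploit that cumulative regret in this model is a sum of nonnegative per-round regrets: for each decision round the per-worker quantity $\frac1N\big(\max_A\sum_{(i,j)\in A}P(\theta_i,\theta_j)-\sum_{(i,j)\in A(t)}P(\theta_i,\theta_j)\big)$ is nonnegative (the benchmark $\lfloor N_1/2\rfloor$ is round-independent), and by definition of $\tau$ these vanish after $\tau$; hence $\textup{Regret}_N(\phi)$ is at least the sum of the expected per-round regrets of the first two rounds of matching, which are made with essentially no information. I would lower bound each of these two contributions separately and add them.

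First round. Since $H(0)=\emptyset$, the first matching is chosen with no information, so by exchangeability of the $\theta_i$ its expected payoff is $\frac N2p^2$ regardless of which pairing it is (each of the $N/2$ pairs is $1$-$1$ independently with probability $p^2$); the omniscient benchmark is $\textup{E}\,\lfloor N_1/2\rfloor=\frac{Np}{2}-O(1)$. Dividing by $N$ gives a first-round per-worker regret of $\frac{p(1-p)}{2}-\textup{o}(1)$.

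Second round. By Lemma~\ref{lma:graph} the state after round one consists of the $2S$ workers lying in $1$-$1$ pairs (all known type $1$, with $\textup{E}[S]=\frac N2p^2$) together with the matching whose edges are the failed pairs (``couples''); conditioning on a pair having failed, it is of type $(0,0)$ with probability $\frac{1-p}{1+p}$ and of type $(0,1)$ or $(1,0)$ with probability $\frac p{1+p}$ each, distinct couples are independent, a lone unknown worker is type $1$ with probability $\frac p{1+p}$, and two workers of the same couple are never both type $1$. The crux is to identify the round-two matching maximizing $\textup{E}\big[\#\{1\text{-}1\text{ pairs}\}\big]$: an exchange argument — swapping any two ``mixed'' pairs (a known-$1$ with an unknown) for a known-$1$ pair plus an unknown--unknown pair cannot hurt, since $1+x^2\ge 2x$ at $x=\frac p{1+p}$, and replacing a same-couple pair by a cross-couple pair strictly helps — shows that the optimum pairs all $2S$ known workers together and all unknowns across distinct couples (feasible unless there are fewer than two couples, an event of exponentially small probability for fixed $p<1$). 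This yields a maximum expected round-two payoff of $\textup{E}\big[S+(\tfrac N2-S)(\tfrac p{1+p})^2\big]+\textup{o}(1)=\frac{Np^2}{2}+\frac{Np^2(1-p)}{2(1+p)}+\textup{o}(1)$, hence a second-round per-worker regret of at least $\frac p2-\frac{p^2}2-\frac{p^2(1-p)}{2(1+p)}-\textup{o}(1)=\frac{p(1-p)}{2(1+p)}-\textup{o}(1)$.

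Adding the two rounds gives $\textup{Regret}_N(\phi)\ge\frac{p(1-p)}{2}+\frac{p(1-p)}{2(1+p)}-\textup{o}(1)=\frac{p(1-p)(2+p)}{2(1+p)}-\textup{o}(1)$, which is at least $\frac12\cdot\frac{(1-p)(p+2p^2)}{1+p}-\textup{o}(1)$ because $2+p\ge 1+2p$; in fact the argument delivers the marginally stronger constant $\frac{p(1-p)(2+p)}{2(1+p)}$. I expect the main obstacle to be the second round: rigorously justifying, over arbitrary matchings, that the greedy ``known-together, unknowns-across-couples'' matching is payoff-optimal via the exchange argument, and bookkeeping the $\textup{o}(1)$ remainders — the parity correction in $\textup{E}\,\lfloor N_1/2\rfloor$ and, more delicately, the low-probability event that almost all round-one pairs succeed, leaving too few couples to support a cross-couple matching.
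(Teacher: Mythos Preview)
Your proposal is correct and follows the same high-level strategy as the paper: both lower-bound the cumulative regret by the expected regret incurred in the first two rounds, using that per-round regret is nonnegative and that the first matching is information-free. The first-round contribution $\tfrac{p(1-p)}{2}$ is computed identically.

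The second-round analyses differ. The paper does not maximize the round-two payoff directly; instead it introduces $K_1$ (the number of $0$--$1$ matches at $t=0$) and $K_2$ (the number of low workers who are in a $0$--$1$ match at \emph{both} $t=0$ and $t=1$), and lower-bounds $\E[K_2]$ by arguing that $K_2$ is minimized when known $1$'s are paired together, obtaining $\E[K_2]\ge N\,p^2(1-p)/(1+p)$ and hence the stated constant. Your route---upper-bounding the maximal round-two expected payoff over all matchings via the exchange argument (known $1$'s together, unknowns across couples)---captures \emph{all} round-two $0$--$1$ matches rather than the subset $K_2$, which is why your constant $\tfrac{p(1-p)(2+p)}{2(1+p)}$ strictly dominates the paper's $\tfrac{p(1-p)(1+2p)}{2(1+p)}$ for $p<1$. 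Both constants have the same limiting ratio $1$ against $\tfrac34(1-p)$ as $p\to 1$, so either suffices for the downstream application. Your exchange argument is also a bit more explicit than the paper's one-line justification that pairing unknowns among themselves minimizes $K_2$; the $o(1)$ bookkeeping you flag (parity of $N_1$, the exponentially rare event of fewer than two failed couples) is routine.
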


Since $\lim_{p\rightarrow 1}\big((1-p)(p+2p^2)/2(1+p)\big)\big/\big(3(1-p)/4\big)= 1,$ this shows that the non-learning $\floor{\sqrt{2\log(N)}}$-stopped Exponential Cliques is asymptotically optimal in the large $p$ regime and that risking bad matches by pairing known high quality workers with unknown workers is not necessary for the platform.

\subsection{ Suboptimality of non-learning algorithms for smaller $p$ }

We now show that non-learning algorithms are suboptimal for $p < 1/3$. We do so by proposing a learning algorithm that achieves a lower regret than $3/4(1-p) +\textup{o}(1)$ in this regime. This algorithm is a modification of the $k$-stopped Exponential Cliques algorithm and again ensures that the set of workers in the unknown graph are partitioned into cliques of equal size. This time, a known 1 worker is assigned to the clique to match with the unknown workers and learn their types faster. \\

\noindent\textbf{Distributed Learning: } This policy proceeds in epochs. 

At time $t=0$, we set epoch $k=0$ and match the workers at random. In the following epochs $k=1, 2, \ldots$ 
\begin{enumerate}[leftmargin=*]
\item Randomly pair cliques $(C_i, C_j)$ in the unknown worker graph. Form pairs $(a_i, a_j)$ of known 1 workers and assign each pair to one of the clique pairs $(C_i, C_j)$ until there are no more $(a_i, a_j)$ or $(C_i, C_j)$ pairs. We refer to the vertices in these $(C_i, C_j)$ pairs as the \emph{exploration set}.
\item If there are an odd number of cliques, choose one clique $C_i$ at random and pair workers in the clique with each other for this epoch.
\item For any $(C_1, C_2)$ not assigned to an $(a_1, a_2)$, run Exponential Cliques on this pair. 
\item For clique pairs $(C_1, C_2)$ in the \emph{exploration set} we use the known 1 workers $(a_1, a_2)$ to help learn $(C_1, C_2)$. At each step, create pairwise matchings between the two cliques but replace one matching $(w_1, w_2)$ with the matchings $(a_1, w_1)$ and $(a_2, w_2)$. It is possible to do this in such a way that no $w_1 \in C_1, w_2 \in C_2$ are matched to each other more than once.\footnote{To ensure that no two workers are matched more than once, at each step $r$ in epoch $i$, match $(a_1, C_1^{(r)} )$ and $(a_2, C_2^{(2^i - r \mod 2^i)})$, where $C_i^{(r)}$ denotes the $r$th worker in $C_i$. Then for all other $s \neq r$, pair $(C_1^{(s)}, C_2^{(s-r \mod 2^i)} )$.  }

\item If the algorithm identifies a 1 worker $w_1 \in C_1$, pair $w_1$ with any known 1 workers available and pair the other 0 workers in $C_1$ with 0 workers. Use the known 1 worker $a_2$ to learn the rest of worker types in $C_2$. 
\end{enumerate}
This policy maintains the invariant that all vertices in the unknown graph are contained in cliques of size $2^k$ at the beginning of each epoch, except for possibly one unpaired clique. Since the algorithm identifies the worker types of all cliques in the exploration set by the end of an epoch, the only cliques remaining will be those which underwent the original Exponential Cliques policy without learning. Thus at the start of the next epoch all workers will be contained in a clique of size $2^{k+1}$.
\begin{figure}[h]
\begin{minipage}[t]{.45\textwidth}
\includegraphics[height=1.2in]{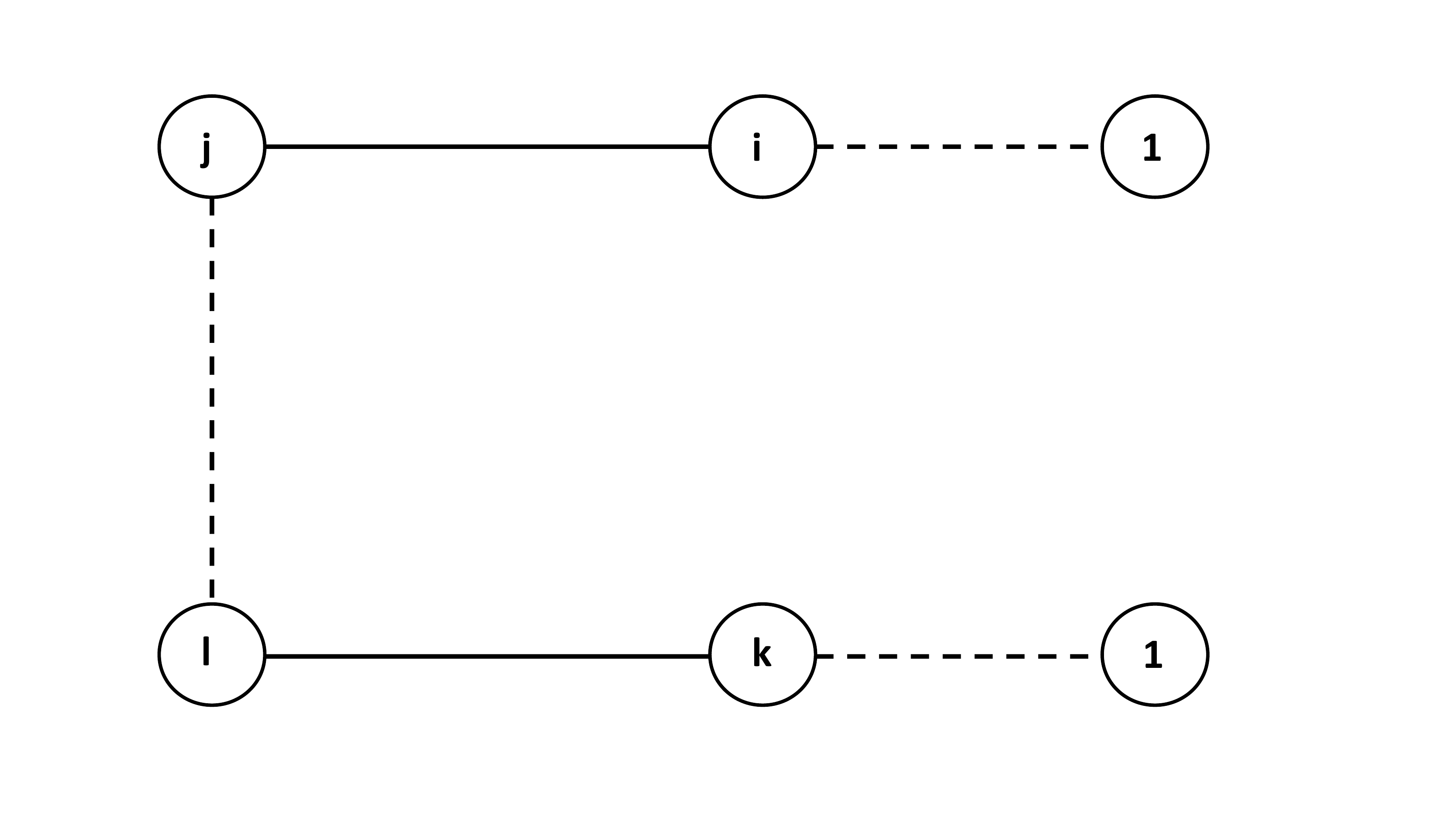}
\caption{\small{Distributed learning at the beginning of the second epoch. Dashed lines represent past matches that led to an outcome of $0$. Dotted lines are the proposed matches.}}\label{fig:distlearn1}
\end{minipage}\hfill
\begin{minipage}[t]{.45\textwidth}
\includegraphics[height=1.5in,angle=0]{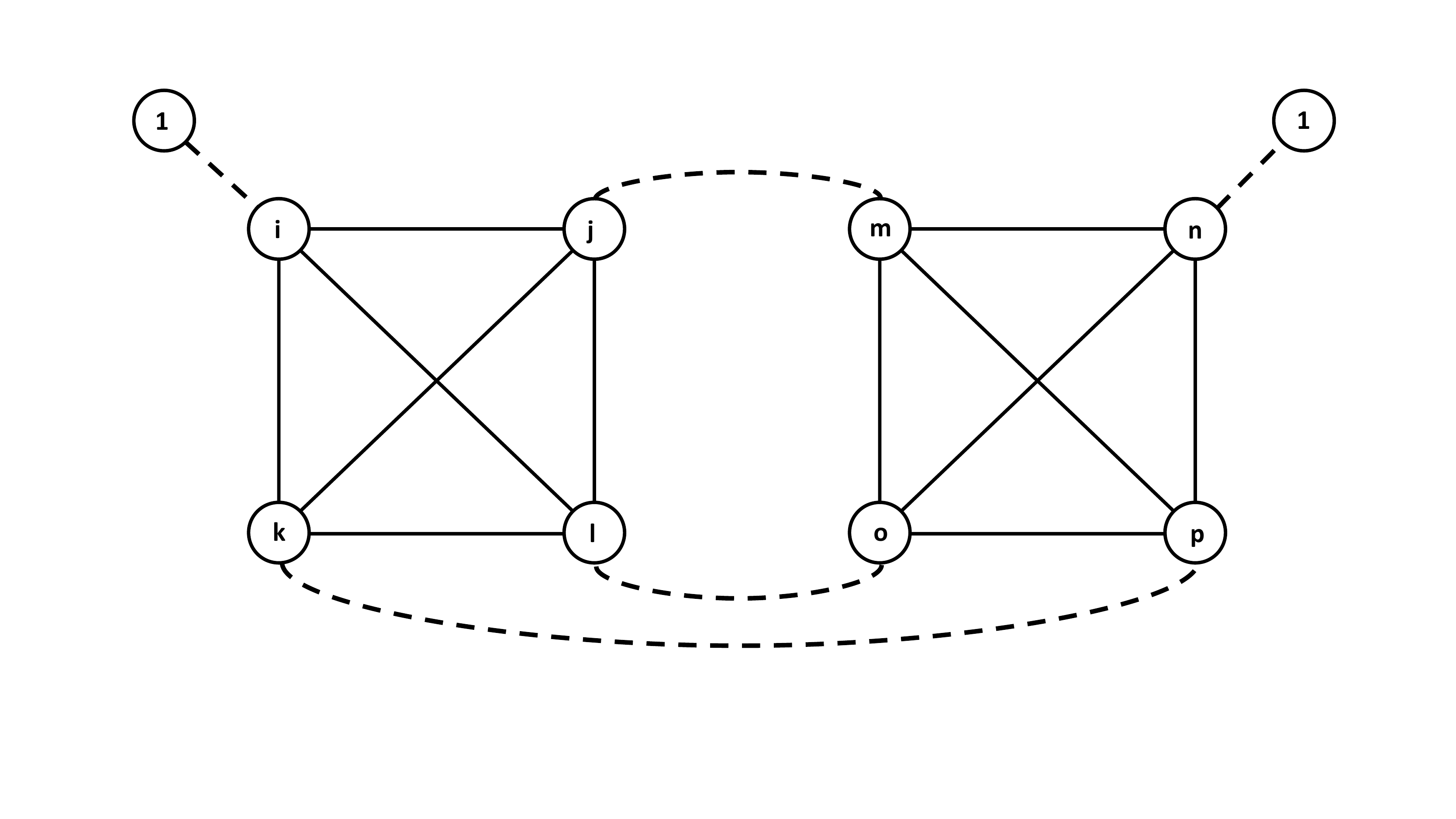}
\caption{{\small Distributed learning at the beginning of the third epoch. Dashed lines represent past matches that led to an outcome of $0$. Dotted lines are the proposed matches.}}
\label{fig:distlearn2}
\end{minipage}
\end{figure}
The fact that the inputs and outputs of Distributed Learning at each epoch are cliques of size $2^k$ and $2^{k+1}$, respectively, allows us to alternate epochs of Exponential Cliques and Distributed Learning. 

\textbf{$k$-stopped Exponential Cliques with $j$-learning ($\textup{EC-L}_j$)}
Let $\textup{EC-L}_j$ denote the policy which matches at random at $t = 1$, conducts $j$ rounds of Distributed Learning, then runs $k$ epochs of Exponential Cliques, and then completes all remaining matches as fast as possible.


The following proposition shows that this learning policy can beat the performance of all non-learning policies for small enough $p$. 

\begin{proposition}\label{thm:ubec-l1}
For $p< 1/3$, $\lim_{N\rightarrow\infty}\textup{Regret}_N(\textup{EC-L}_1) < \frac{3}{4} (1-p).$
\end{proposition}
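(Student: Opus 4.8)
The plan is to directly estimate $\lim_{N\to\infty}\textup{Regret}_N(\textup{EC-L}_1)$ by tracking the per-worker regret contribution coming from each ``generation'' of $0$ workers, exactly as in the heuristic given for $k$-stopped Exponential Cliques, but now accounting for the fact that a single round of Distributed Learning is inserted before the Exponential Cliques phase. First I would set up the bookkeeping: after the random matching at $t=0$, a $1-p+\textup{o}(1)$ fraction of workers are $0$ workers whose types are still unknown, and a $p(1-p)+\textup{o}(1)$ fraction of workers are $1$ workers who have been identified (each identified $1$ having been paired with a $1$). These identified $1$ workers are the resource Distributed Learning spends. In the first Distributed Learning epoch ($k=1$) the unknown graph consists of cliques of size $2$, and we pair known $1$ workers into pairs $(a_1,a_2)$ and assign them to clique-pairs $(C_1,C_2)$ to run the accelerated learning move. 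I would compute the fraction of size-$2$ cliques that get a known $1$ pair assigned to them: roughly, there are $\approx Nq^2/4$ clique-pairs at $k=1$ where $q=1-p$ (actually I should be careful — after epoch $0$, cliques of size $2$ are formed from pairs that returned $0$, so the number of such cliques is $\approx N q \cdot (\text{something})/2$; I would use Lemma~\ref{lma:weak-max-regret} to work with expected counts), and there are $\approx Np(1-p)/2$ known $1$-pairs, so for $p$ not too large essentially all clique-pairs can be served, and for general $p<1/3$ a definite positive fraction is served.

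The key computation is the regret charged to a $0$ worker depending on whether its clique-pair was in the exploration set. For a clique-pair $(C_1,C_2)$ \emph{not} served by a known $1$-pair, the worker follows the ordinary Exponential Cliques dynamics and, by the heuristic in the excerpt, contributes expected regret $3/4$ (it was matched to a $1$ once while its type-$B$ clique was being formed in epoch $0\to1$, and then with probability $1/2$ matched to the lone $1$ of the partner clique). For a clique-pair served by a known $1$-pair $(a_1,a_2)$: in the accelerated move we replace one cross-match $(w_1,w_2)$ by $(a_1,w_1)$ and $(a_2,w_2)$, so the lone $1$'s of $C_1$ and $C_2$ are \emph{never} matched to each other during this epoch, and instead each unknown worker in $C_1$ is matched to $a_1$ (a known $1$) exactly once and each unknown worker in $C_2$ to $a_2$ exactly once. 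Hence every $0$ worker in such a clique-pair gets matched to a $1$ exactly once during the learning epoch, plus the one match to a $1$ it already incurred in epoch $0$ forming its size-$2$ clique, plus (via step 5) possibly cleanup matches that I need to check contribute $\textup{o}(1)$ overall — giving expected regret $2\cdot\tfrac12 = 1$ per served $0$ worker, \emph{strictly worse} per worker, which seems backwards. So the actual gain must come from elsewhere: the served cliques are fully resolved by the end of epoch $1$, so those $0$ workers never participate in the epoch-$2$ ``lone $1$ vs lone $1$'' coin flip, and more importantly the known $1$ workers $a_1,a_2$, having been ``used,'' and the newly discovered $1$'s, feed back into a larger pool that accelerates later epochs — I would need to re-derive the per-generation recursion carefully and compare the total against $\tfrac34 q$. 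Let me instead frame it as: let $R_{NL}$ and $R_{L}$ be the total expected per-worker regret contributions from the unknown pool under pure Exponential Cliques versus under one round of Distributed Learning followed by Exponential Cliques; show $R_L < R_{NL}$ for $p<1/3$ by exhibiting that a positive fraction of $0$ workers have their ``epoch-$2$-and-beyond'' regret (which is a strictly positive constant times $q$ per such worker under no learning) eliminated, at the cost of at most $O(p)$ extra regret per such worker from the learning matches — and $O(p) < (\text{saved constant})$ precisely when $p<1/3$.

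The main obstacle is getting the constants exactly right so that the crossover lands at $p<1/3$: I must (i) correctly count, to leading order in $N$, how many size-$2$ cliques exist at the start of epoch $1$ and how many get served by a known-$1$ pair (this requires tracking that known $1$'s are produced at rate $\approx p(1-p)$ per worker at $t=0$, which by Lemma~\ref{lma:weak-max-regret} and concentration is accurate up to $\textup{o}(N)$); (ii) correctly compute the expected number of $0-1$ matches a served $0$ worker incurs, including the residual matches in step 5 and including the fact that discovering a lone $1$ inside a served clique changes the accounting; and (iii) verify the $\textup{o}(1)$ terms — the unpaired-clique contribution and the residual-matching phase — are genuinely negligible, which follows from the same stopping-time argument as in Proposition~\ref{thm:missing} since $\textup{EC-L}_1$ only prepends one $O(1)$-length phase. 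I would organize the proof as: (1) concentration/reduction to expected counts via Lemma~\ref{lma:weak-max-regret}; (2) exact per-worker regret for an unserved $0$ worker $=3/4+\textup{o}(1)$; (3) exact per-worker regret for a served $0$ worker, shown to be some $f(p)$ with $f(p) < 3/4$ iff $p<1/3$; (4) a served fraction bounded below by a positive constant for $p<1/3$; (5) combine, plus $\textup{o}(1)$ bounds on auxiliary phases, to conclude $\lim_N \textup{Regret}_N(\textup{EC-L}_1) \le \alpha(p)\cdot\tfrac34 q + (1-\alpha(p)) f(p) < \tfrac34 q$.
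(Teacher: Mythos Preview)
Your five-step skeleton at the end is exactly the paper's approach: split the epoch-$1$ unknown pool into an exploration set (clique-pairs served by a known-$1$ pair $(a_1,a_2)$) and a non-exploration set, invoke the Exponential Cliques bound for the latter, compute the former directly, and combine. The gap is that your attempted computation for the exploration set is wrong on several counts, and your proposed rescue points in the wrong direction. First, the number of known-$1$ pairs after epoch~$0$ is $\approx Np^{2}/2$ (the number of $1$--$1$ matches), not $Np(1-p)/2$. Second, a served $0$ worker need not have met a $1$ in epoch~$0$: that happens only when its size-$2$ clique is of type~B. Third, during the learning epoch a served $0$ worker does not meet a $1$ ``exactly once'': besides the match to $a_i$, it participates in cross-matches with the partner clique (which may itself contain a $1$), and rule~5 may terminate the epoch early. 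So your figure of regret~$1$ is not the right number. Most importantly, there is \emph{no} ``feeding back into later epochs'' in $\textup{EC-L}_1$: there is a single learning round, after which all known $1$'s (old and newly discovered) simply pair off with each other and the non-exploration set runs pure Exponential Cliques unaided. The entire gain has to be accounted for locally, in the exploration set.

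The device you are missing is to condition on the \emph{types} of the two cliques in a served pair---both type~B with probability $(2p/(1+p))^{2}$, one of each with probability $2\cdot\tfrac{2p}{1+p}\cdot\tfrac{1-p}{1+p}$, both type~A with probability $((1-p)/(1+p))^{2}$ (cf.\ Lemma~\ref{lemma:ProbCliqueHigh} with $m=2$)---and in each of the three cases enumerate the handful of matches made in epoch~$0$ plus the two steps of the learning epoch to count $0$--$1$ matches exactly. Averaging over the three cases gives the per-exploration-worker regret as an explicit rational function of $p$, and a direct comparison with the Exponential Cliques benchmark yields the strict improvement. The condition $p<1/3$ in the paper enters primarily as a \emph{capacity} constraint guaranteeing that every known-$1$ pair can be deployed (so $\E[K]=2Np^{2}$); no recursion over later epochs, and no tracking of ``newly discovered $1$'s feeding back,'' is needed.
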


The Weakest Link model exhibits a tradeoff between myopic regret minimization and learning, in terms of who to pair known high type workers to. We showed that our non-learning policy is optimal as $p \rightarrow 1$ and that all non-learning policies are suboptimal for $ p \in [0, 1/3]$. Thus if the platform wishes to maximize the expected aggregate payoff across all workers, a greedy policy is not always optimal and the platform must take into account the proportion of high quality and low quality workers in its worker pool.

\section{Strongest Link}\label{sec:max}
We now consider the strongest link setting. In particular, for any pair of workers $i,\,j$, the payoff that the team formed by this pair generates is $P(i,j) = \max\{\theta_i, \theta_j\}$.




\textbf{Regret in the Strongest Link model.}
In this setting, recall that for any pair of workers $(i,j)$ with $\theta_i, \theta_j \in \{0,1\}$, the reward observed for this pair is $P(\theta_i,\theta_j) = \max\{\theta_i, \theta_j\}$, and a positive feedback is generated when $\theta_i = 1$ or $\theta_j = 1$. Thus, if the worker qualities are known, the optimal matching is to make $0-1$ matches to the extent possible. When $p<0.5$, and when $N$ is large, some $0-0$ matches are inevitable in the optimal matching. Thus regret can be measured by the number of $1-1$ matches. On the other hand if $p>0.5$, some matches $1-1$ are inevitable and regret can be measured by the number of $0-0$ matches. We formalize this observation in the following lemma. The proof can be found in the appendix.
\begin{lemma}[Characterization of Regret]\label{lma:char of max regret}
For any policy $\phi$ such that $E(\tau)\leq\textup{O}(1)$, define $\textup{Regret}''_N(\bm{\theta},\phi, \tau)$ to be $1/N$ times the number of $0-0$ $(1-1)$ matches for $p > 0.5$ ($p < 0.5$) until $\tau$. Let $\textup{Regret}''_N(\phi)\triangleq\textup{E}[\textup{Regret}''_N(\bm{\theta},\phi, \tau)]$. Then $|\textup{Regret}''_N(\phi)-\textup{Regret}_N(\phi)|\leq \textup{o}(1)$.
\end{lemma}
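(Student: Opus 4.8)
The plan is to decompose the regret into two pieces: an ``ideal'' term that accounts exactly for the number of costly same-type matches, and an ``overhead'' term coming from the mismatch between the fixed-$\bm\theta$ optimal matching value and the per-period reward once learning is complete, plus any transient effects. I would first fix a type assignment $\bm\theta$ and compute $\max_{A\in\Psi_N}\sum_{(i,j)\in A}P(\theta_i,\theta_j)$ explicitly. Since $P(\theta_i,\theta_j)=\max\{\theta_i,\theta_j\}$, a pair contributes $1$ unless it is a $0-0$ pair. When $N_1\le N/2$ (the typical case for $p<0.5$), the optimal matching pairs every $1$ worker with a distinct $0$ worker, leaving $(N-2N_1)/2$ forced $0-0$ pairs, so the optimum equals $N/2-(N-2N_1)/2 = N_1$; equivalently the per-worker optimum is $N_1/N$, and every $1-1$ match made by the policy costs exactly $1$ (it ``wastes'' a $1$ worker who could have covered a $0$, creating one extra $0-0$ pair). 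When $N_1\ge N/2$ (typical for $p>0.5$), every matching already has at least $N_1-N/2$ forced $1-1$ pairs, the optimum is $N/2$, and each $0-0$ match the policy makes costs exactly $1$. These are deterministic statements given $\bm\theta$, and on the event $\{N_1\le N/2\}$ (resp.\ $\{N_1\ge N/2\}$) they let me identify the realized regret with the count of $1-1$ (resp.\ $0-0$) matches, up to boundary/parity corrections of size $O(1)$ per stage.

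Next I would split the total regret over $t=1,\dots,\tau$ into the regret accrued while learning and the (zero) regret accrued after $\tau$. By definition of $\tau$, once the policy has identified all types it plays an optimal partition, so it incurs no further regret; thus $\textup{Regret}_N(\bm\theta,\phi,\tau)$ equals the sum over the finitely many learning stages of the per-stage per-worker regret, which by the paragraph above is $(1/N)$ times the number of costly same-type matches in that stage, plus a correction. The correction per stage is $O(1/N)$ coming from (i) the odd/parity case when $N_1$ is odd or when the relevant inequality $N_1\lessgtr N/2$ is near-tight (forcing at most one extra mismatched pair), and (ii) the fact that the policy, during a learning stage, may already be making the forced $0-0$ or $1-1$ pairs that the optimum also makes, so those should not be double-counted — but these are exactly the pairs excluded by ``$\textup{Regret}''_N$ counts $0-0$ ($1-1$) matches'' versus ``all'' matches is not an issue since $\textup{Regret}''_N$ counts the total number, and the forced ones contribute to both the policy and the benchmark, canceling. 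Summing over the $O(1)$ learning stages (here I invoke the hypothesis $E(\tau)\le O(1)$, which is exactly what makes the number of stages bounded in expectation), the accumulated correction is $O(E(\tau))\cdot O(1/N) = O(1/N) = \textup{o}(1)$.

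The last step is to take expectations over $\bm\theta$ and handle the low-probability bad event. Write $\textup{Regret}_N(\phi) = E[\textup{Regret}_N(\bm\theta,\phi,\tau)\mathbbm{1}\{N_1\le N/2\}] + E[\cdots\mathbbm{1}\{N_1 > N/2\}]$ for $p<0.5$; by a Chernoff bound on $N_1\sim\textup{Binomial}(N,p)$, $\Pr(N_1>N/2)$ decays exponentially in $N$, and on that event the regret is bounded crudely by $E(\tau)\le O(1)$ (regret per worker per stage is at most $1$), so that contribution is $\textup{o}(1)$. On the dominant event the identity from the previous paragraph gives $\textup{Regret}_N(\bm\theta,\phi,\tau) = \textup{Regret}''_N(\bm\theta,\phi,\tau) + O(\tau/N)$, and taking expectations yields $|\textup{Regret}_N(\phi) - \textup{Regret}''_N(\phi)| \le \textup{o}(1)$. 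The symmetric argument with the roles of $0$ and $1$ swapped handles $p>0.5$.

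I expect the main obstacle to be the careful bookkeeping of the per-stage correction term: one must argue that during each learning stage the policy's partition differs from an optimal partition (for the realized $\bm\theta$) only by an $O(1)$ number of pair-swaps — intuitively because at most a bounded number of forced same-type pairs and at most one parity-induced pair separate the two — and that the cost of each such swap is exactly $1$. This is precisely where the hypothesis $E(\tau)\le O(1)$ is essential, since it caps the number of stages whose corrections must be summed; without it the $\textup{o}(1)$ conclusion would fail. Everything else is a direct computation of the clairvoyant optimum and a routine concentration bound on $N_1$.
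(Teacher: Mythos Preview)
Your approach is essentially the paper's: compute the clairvoyant optimum as $\min(N_1,N/2)$, observe that on the dominant event $\{N_1\le N/2\}$ (for $p<0.5$) the per-stage regret equals $N_1-(a_t^{0,1}+a_t^{1,1})=a_t^{1,1}$ exactly, and handle the complementary event by concentration of $N_1$ together with $E(\tau)=O(1)$. The paper packages the same identity as $Y_t-X_t=a_t^{1,1}$ with $Y_t=N_1$ and $X_t$ the period-$t$ reward, but the algebra is identical.

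Your write-up overcomplicates one point, and your ``main obstacle'' paragraph misidentifies the difficulty. On $\{N_1\le N/2\}$ the identity is \emph{exact}: since $N$ is even there is no parity correction, and the forced same-type pairs in the optimum are $0$--$0$ (not $1$--$1$), so there is nothing to double-count against $\textup{Regret}''_N$. In particular, the policy's partition may differ from the optimum by $\Theta(N)$ pair-swaps (e.g.\ at $t=0$ a random matching has $\Theta(N)$ many $1$--$1$ pairs), not $O(1)$, yet the identity still holds because each $1$--$1$ match contributes exactly one unit of regret regardless of how many there are. The only place $E(\tau)=O(1)$ is actually needed is in controlling the contribution from the exponentially-rare event $\{N_1>N/2\}$, on which $|\textup{Regret}_N-\textup{Regret}''_N|\le\tau/2$; you and the paper treat that step at the same level of informality.
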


\textbf{Learning in the Strongest Link model.}
In this setting, 0 workers are identified by being matched other 0 workers, either known or unknown. While 1 workers are identified when they are matched to either a known 0 worker, or an unknown 0 worker that later gets identified. Thus, unlike the Weakest Link model, a pairing of an unknown worker to a \emph{0} worker is necessary for identification.

{\bf Inevitability of Regret.}  In any algorithm, in the first period, workers are paired off arbitrarily. At the end of this period, there is the set of pairs of workers $K$ that have been identified as 0 (ones that resulted in an outcome of $0$) and the set $U$ of pairs of workers of unknown quality (ones that resulted in an outcome of $1$). Of the pairs that are of unknown quality, it is straightforward to compute that $r = p/(2-p)$ fraction are of the $1-1$ type and the remainder are of the $0-1$ type. Thus in the first period itself we have an unavoidable fraction of matches that incur regret irrespective of $p$. Moreover, more regret is unavoidable in the subsequent periods. 

1) When $p>0.5$ the pairs of known $0-0$ workers in $K$ incur regret if they are matched to each other and the goal is to find these workers high quality matches from the unknown population as quickly as possible. But matching known 0 workers to unknown workers to learn inevitably incurs regret due to potential matches to unknown 0 workers. 

2) When $p<0.5$ the pairs of workers in $U$ that are of the $1-1$ type incur regret and they need to be matched to a 0 worker as soon as possible. Since unknown $0-1$ pairs already incur no regret, it is the known 0 workers at the end of the first period that must be matched to the unknown $1-1$ pairs in $U$. But in this case, matching these known 0 workers to unknown workers does not incur any regret. Regret is nevertheless inevitable because of the unavoidable $1-1$ matches between unknown workers.

In either case, the central question is: how to utilize workers that have been identified as 0 at the end of the first period to explore the unknown workers?


\subsection{Candidate algorithms}
After observing outcomes of the pairings in the first period, consider the situation from the perspective of a conservative manager. Since all the pairs of workers in $U$ result in a reward of $1$, she may be tempted to not jeopardize that guaranteed reward and hence match the learned 0 workers amongst themselves. But any manager can be convinced that this approach is catastrophic in terms of regret. The problem is that there could be $1-1$ type pairs in $U$, which could be separated into two $0-1$ pairs, thus doubling the reward. Thus any algorithm that eventually attains $0$ regret must pair the discovered 0 workers that do not have a high type match, to unknown workers. 

What, then, is the policy that matches these known 0 workers to unknown workers in the most conservative fashion, i.e., with the least negative impact on immediate payoffs? This policy is simply the one where every pair of known 0 workers are matched to an unknown pair of workers to the extent possible (remaining matches are preserved). By doing so, there is no adverse impact on immediate payoffs: either the unknown pair was $1-1$ in which case the payoff gets doubled, or it is $0-1$ in which case the immediate payoff (from that pair) remains the same, i.e., $1$. Every known $0-0$ pair is successively paired to an unknown pair until it gets matched to a $1-1$ pair. This policy thus eventually attains no regret. We call this policy $1$-chain policy, described formally below, for reasons that will become clear.

At this point, we would like to understand what is the power of this most conservative asymptotically regret optimal algorithm. In particular, are there gains from adopting a more aggressive experimentation approach? To understand this, we propose a slightly different algorithm that we refer to as a $2$-chain algorithm, formally described below. We show a remarkable reversal of the relative performance of the $1$-chain and $2$-chain algorithms at $p=0.5$: the former incurs lower regret for $p<0.5$ where as the situation is reversed for $p>0.5$.

\begin{figure}[h]
\begin{minipage}[t]{.45\textwidth}
\includegraphics[height=1.2in]{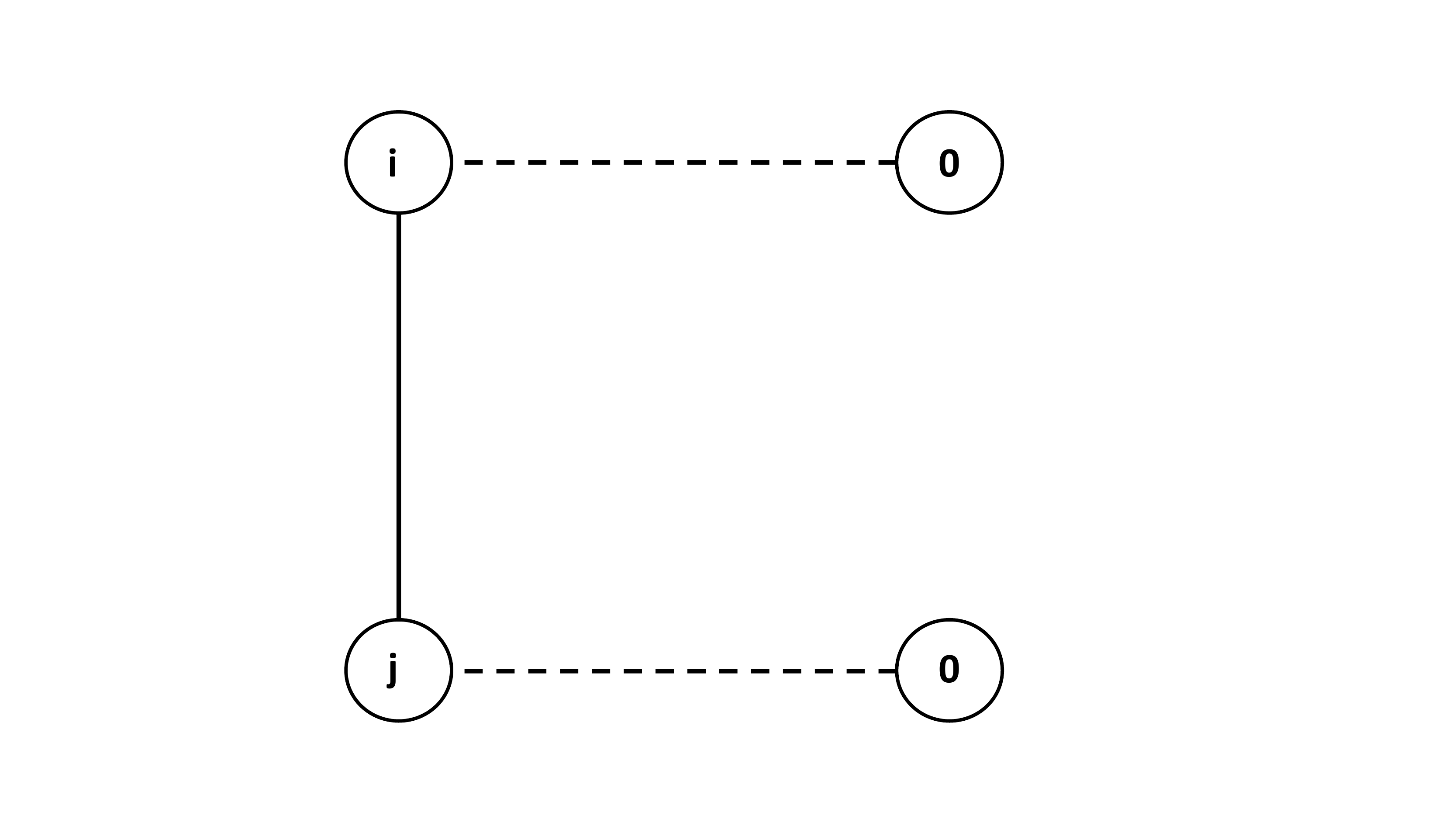}
\caption{{\small A $1$-chain epoch. Dashed lines represent past matches that led to an outcome of $0$. Dotted lines are the proposed matches.}}\label{fig:onechain}
\end{minipage}\hfill
\begin{minipage}[t]{.45\textwidth}
\includegraphics[height=1.2in,angle=0]{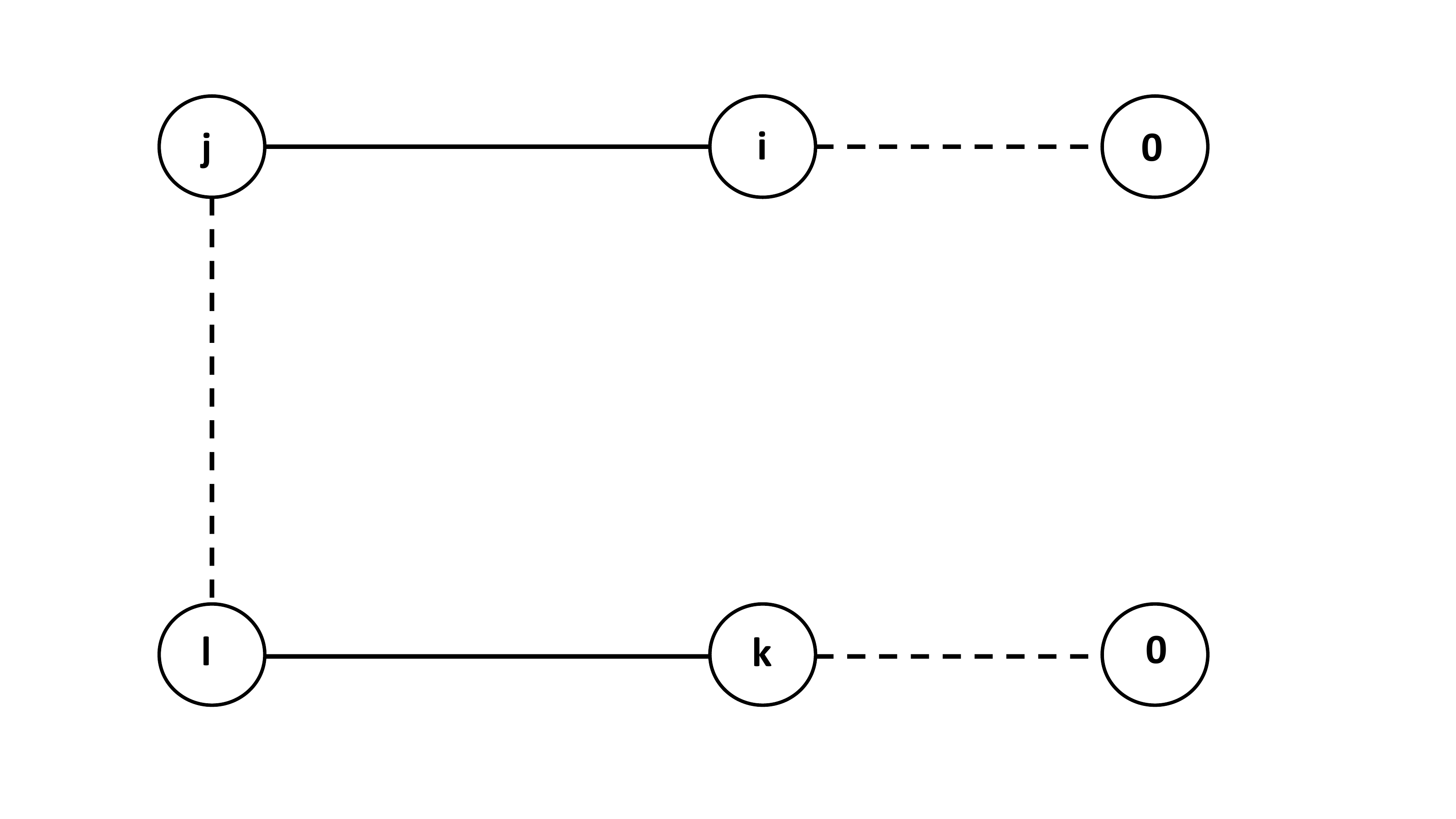}
\caption{{\small A $2$-chain epoch. Dashed lines represent past matches that led to an outcome of $0$. Dotted lines are the proposed matches.}}
\label{fig:twochain}
\end{minipage}
\end{figure}

\subsubsection{1-chain and 2-chain asynchronous algorithms}

We now define the $1$-chain and $2$-chain algorithms that we referred to above. At each time, the algorithms maintain 1) a dynamic set $U$ of pairs of workers of unknown quality and 2) a dynamic set $P$ of the set of pairs of workers of known quality that are matched to each other, and will be matched to each other for posterity. 

Now for each pair $0-0$ in $K$ (the set of pairs of workers that have been identified as being of low quality in period 1), the algorithms asynchronously operate in epochs to find high quality matches for each of the two low quality workers. The two algorithms differ in how the epochs are implemented.

\begin{itemize}[leftmargin=*]
\item {\bf $1$-chain.} While matches have not been found for the $0-0$ pair, pick a pair from $U$. Let it be $i-j$. Then match $0-i$ and $0-j$. If both matches result in an outcome of $1$, then add $0-i$ and $0-j$ to P and stop.  The $0-0$ pair has been matched. If not, then add $i-j$ to P and go to the next epoch. If $U$ is empty at the beginning of any epoch, add $0-0$ to $P$ and stop. Note that each epoch is of time $1$.
\item {\bf $2$-chain.} While matches have not been found for the $0-0$ pair, pick two pairs from $U$. Let they be $i-j$ and $k-l$. Then match $0-i$, $0-k$ and $j-l$. Now there are 4 cases:
\begin{enumerate}[leftmargin=*]
\item All matches result in an outcome of $1$; in which case add $0-i$, $0-k$ and $j-l$ to $P$ and stop.
\item One of $0-i$ or $0-k$ results in an outcome of $1$, the other resulting in $0$, and $j-l$ results in an outcome of $1$. In this case, let $0-i$ be the match that resulted in outcome 0. Then in the next time period, match $0-k$, $0-l$, and $i-j$. If $0-l$ leads to an outcome of $1$, then add $0-k$, $0-l$, and $i-j$ to $P$ and stop. Else, add $i-j$ and $k-l$ to $P$ and go to the next epoch.
\item Both $0-i$ or $0-k$ result in an outcome of $0$, and $j-l$ results in an outcome of $1$. In this case, add $i-j$ and $k-l$ to $P$ and go to the next epoch.
\item Both $0-i$ or $0-k$ result in an outcome of $1$, and $j-l$ results in an outcome of $0$. In this case, add $i-j$ and $k-l$ to $P$ and go to the next epoch.
\end{enumerate}
These are the only $4$ possibilities. If $U$ has one pair only at any epoch, implement the $1$-chain epoch. If $U$ is empty, add $0-0$ to $P$ and stop. Note that each epoch in this case is either of $1$ period  (cases 1, 3 or 4) or $2$ periods (case 2).
 \end{itemize}
 
At any time, any pairs in the set $U$ that are not demanded by any of the $0-0$ pairs in $K$ for experimentation, are matched to each other.
The following proposition summarizes our main finding.
\begin{proposition}
\begin{enumerate} \label{thm:2chain}
\item  \label{thm:2chain_a} When $p>0.5$, the $2$-chain algorithm incurs a lower regret compared to the $1$-chain algorithm.
\item \label{thm:2chain_b} When $p<0.5$, the $1$-chain algorithm incurs a lower regret compared to the $2$-chain algorithm. 
\end{enumerate}
\end{proposition}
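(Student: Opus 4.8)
The plan is to compute, for each algorithm and each regime ($p>1/2$ and $p<1/2$), the limiting per-worker regret as $N\to\infty$ and then compare the two closed forms. By Lemma~\ref{lma:char of max regret}, since both algorithms resolve all types in $E(\tau)=\textup{O}(1)$ periods (each $0$--$0$ pair's processing time is dominated by a geometric variable of $\textup{O}(1)$ mean, and the shared pool $U$ has mass $\Theta(N)$ and is drained at rate $\Theta(N)$ per period), the regret equals, up to $\textup{o}(1)$, the expected number of $0$--$0$ matches (if $p>1/2$) or of $1$--$1$ matches (if $p<1/2$), divided by $N$. One also checks that the eventual matching of each algorithm — after the single re-optimization once all types are known — is asymptotically optimal, so all regret is incurred during learning, and it splits as (period-$1$ bad matches) $+$ (bad matches generated while the $0$--$0$ pairs of $K$ hunt for high-type partners in $U$). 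The period-$1$ term is the same for both algorithms, so the comparison reduces to the second term.

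For Part~\ref{thm:2chain_a} ($p>1/2$), I would track the fluid/LLN-limit pools $a(t)$ (mass of still-active $0$--$0$ pairs), $u_1(t)$ ($1$--$1$ pairs remaining in $U$), $u_0(t)$ ($0$--$1$ pairs remaining in $U$); for the $2$-chain one needs a couple of extra bookkeeping variables because Case~2 epochs last two periods. For the $1$-chain the dynamics are immediate and give that the expected number of $0$--$0$ matches charged to one $0$--$0$ pair is the number of $0$--$1$ draws before the first $1$--$1$ draw, i.e.\ $(1-r)/r=2q/p$ with $r=p/(2-p)$. For the $2$-chain, enumerating the four epoch outcomes gives per-epoch termination probability $r(2-r)$ and the first-step recursion $M = r\cdot 0 + \tfrac{1-r^2}{2}\big(\tfrac{2r}{1+r}\cdot 1 + \tfrac{1-r}{1+r}(2+M)\big) + \tfrac{(1-r)^2}{4}(2+M) + \tfrac{(1-r)^2}{4}(1+M)$ for the expected $0$--$0$ matches $M$ per pair, which solves to $M=\tfrac{(1-r)(7-3r)}{4r(2-r)}=\tfrac{q(7-5p)}{p(4-3p)}$. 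Since the per-pair counts scale by the common factor $Nq^2/2$ and the period-$1$ terms agree, Part~\ref{thm:2chain_a} reduces to $\tfrac{q(7-5p)}{p(4-3p)}<\tfrac{2q}{p}$, i.e.\ $7-5p<2(4-3p)$, i.e.\ $p<1$ — always true — with regret gap exactly $\tfrac{q^4}{2p(4-3p)}$.

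For Part~\ref{thm:2chain_b} ($p<1/2$) the bad matches are $1$--$1$ matches, and they arise in two ways: from $1$--$1$ pairs sitting idle in $U$ (matched to themselves each period until some $0$--$0$ pair draws them), and — only for the $2$-chain — from new $1$--$1$ matches created while breaking up a $1$--$1$ pair (the $j$--$l$ pair in Case~1 when $k$--$l$ is also $1$--$1$, which is then frozen in $P$ and keeps generating $1$--$1$ matches until the final re-optimization, and similarly the $j$--$l$ match in Case~3). I would again solve the pool ODEs, integrate $u_1(t)$ over the horizon to get the idle-pair contribution (for the $1$-chain $u_1(t)=\tfrac{q^2}{2}e^{-pt/(2-p)}-\tfrac{q-p}{2}$, giving a closed form with a $\ln\tfrac{q^2}{q-p}$ term), add the extra-creation term for the $2$-chain, and verify the resulting expressions satisfy the reversed inequality. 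The qualitative driver is that the $2$-chain's aggressive batching of $U$ manufactures spurious $1$--$1$ matches that a one-pair-at-a-time, $0$--$0$-centric policy avoids, and this outweighs the $2$-chain's slightly faster draining of idle $1$--$1$ pairs.

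The main obstacle is the asynchronous, resource-constrained dynamics: many $0$--$0$ pairs draw simultaneously from the shared, shrinking pool $U$, so the "one pair in isolation" picture is valid only while $U$ is abundant. Near $p=1/2$ this is essential rather than cosmetic — the $2$-chain's demand for $U$-pairs (in particular for $1$--$1$ pairs, at expected rate $2/(2-r)$ per $0$--$0$ pair) can exceed supply, leaving some $0$--$0$ pairs stuck as $0$--$0$ until the final re-optimization, and the extra regret this creates is of the same order as the gap $\tfrac{q^4}{2p(4-3p)}$ and is precisely what makes the transition at $p=1/2$ sharp. I would therefore carry out the pool-ODE analysis so that the stopping time — whichever pool hits zero first — is tracked exactly, justify the fluid limit by a Wormald/Azuma-type concentration argument, and conclude that the comparison holds uniformly for $p$ bounded away from $1/2$ with the two regret curves meeting continuously as $p\to 1/2$.
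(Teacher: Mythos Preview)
Your approach is genuinely different from the paper's, and where you carry it out (Part~\ref{thm:2chain_a}) the computations are correct: the per-pair counts $(1-r)/r$ for the $1$-chain and $(1-r)(7-3r)/(4r(2-r))$ for the $2$-chain are right, and their difference indeed equals $q^{4}/(2p(4-3p))$ after scaling by $Nq^{2}/2$. The paper, by contrast, never computes either regret in closed form. It uses a local \emph{replacement/coupling} argument: take two consecutive unknown pairs $i$--$j$, $k$--$l$ that the $1$-chain would process, replace that block by a single $2$-chain epoch on the same four workers, condition on the four outcome events $E_1$--$E_4$, and show the $0$--$0$ count (for $p>0.5$) or $1$--$1$ count (for $p<0.5$, with an oracle reveal in $E_2$) weakly decreases / increases. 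Iterating the replacement gives the comparison. Your route is more quantitative; theirs is shorter and, crucially, never needs to reason about the shared pool $U$.

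That last point is where your proposal has a real gap. Your isolated-pair analysis assumes each $0$--$0$ pair draws from an inexhaustible $U$ with fixed $1$--$1$ fraction $r$. But the $2$-chain consumes $2/(r(2-r))$ $U$-pairs per $0$--$0$ pair in expectation, and for $p$ just above $1/2$ one checks $2q^{2}(2-p) > p^{2}(4-3p)$ (e.g.\ at $p=0.52$), so expected demand exceeds $|U|$ and some $0$--$0$ pairs will hit an empty pool. Your per-pair formula $M$ then under-counts their $0$--$0$ matches, and this shortfall is of the same order as the gap $q^{4}/(2p(4-3p))$ you are trying to establish. You acknowledge this and defer to a Wormald-type fluid limit, but you do not set up or solve those ODEs, nor argue which pool empties first, so Part~\ref{thm:2chain_a} is not actually proved for all $p>1/2$. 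The paper's coupling argument avoids this difficulty entirely because both policies are compared on the \emph{same} draws from $U$.

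Part~\ref{thm:2chain_b} in your proposal is only a sketch: you name the two sources of $1$--$1$ matches and write down one pool trajectory for the $1$-chain, but you neither compute the $2$-chain's idle-pair integral nor the extra-creation term, and you do not compare them. The paper handles this part by the same replacement mechanism (run in reverse, with an oracle that reveals $j,k$ after $E_2$), and gets explicit conditional regrets $X_i,Y_i$ whose weighted sums compare directly. As written, your Part~\ref{thm:2chain_b} is a plausible plan but not a proof.
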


In the proof, we show that the $1$-chain algorithm accumulates fewer matches between two high type workers for all $p \in [0,1]$. However, the $2$-chain algorithm results in fewer matches between two low type workers than the $1$-chain algorithm for $p> 0.5$. Thus the change in relative performance between $1$-chain and $2$-chain algorithms is a result of the discontinuity in the characterization of regret and not a discontinuity in the performance of the algorithms themselves.
\section{Conclusion}\label{sec:conclusion}
This work suggests that online labor platforms which facilitate flash teams and on-demand tasks can drastically improve the overall performance of these teams by altering the algorithms that design these teams. A common theme that we explored is what is to be done with workers whose qualities have been learned: should they be utilized to exploit or to explore? We demonstrated that the answer intricately depends on the payoff structure and and on the apriori knowledge about the distribution of skill levels in the population. We provided several insights into to these tradeoffs in the case of two natural payoff structures. In particular, through fundamental regret bounds, we showed that simple myopic strategies can be highly suboptimal in certain regimes, while can work well in certain others. We are optimistic that these insights can help guide effective managerial decisions on these platforms. 



\bibliographystyle{alpha}
\bibliography{sample-bibliography}




\appendix
\section{Proofs of all results}
\begin{proof}[Proof of Lemma~\ref{lma:graph}]

We only show the result for the Weakest Link model; the Strongest Link setting is similar. Note that the entire history can be represented in an edge-labeled graph $G_t' = (V', E_t')$ where $V = \{ u \}$ and $(u,v) \in E_t$ if $(u,v) \in A(0) \cup \ldots \cup A(t-1)$ and with labeling function $f: E' \rightarrow \{0,1\}$ such that $f: (u,v) \mapsto P(u,v)$. Note that the only pieces of information in the history $H(t)$ not captured in $G_t'$ are the (possible) repeated matchings between workers $u, v$ and the order in which these matches occurred. Neither affects any posterior probabilities of the worker or subsets of the workers. Thus this graph captures all joint posterior probabilities $\Pr(P(u, u') = 1)$ for all workers $u, u'$. 

Now suppose that we introduce a vertex-labeling function $g: V'' \rightarrow \{0,1\}$ on some subset $V'' \subseteq V'$ of the vertex set, to be defined, with $g: v \mapsto \theta_v$. 

For any edge $(u,v) \in E'$ with $f((u,v)) = 1$, it must be true that $\Pr(\theta_u = 1 | H_t) = \Pr(\theta_v = 1 | H_t) = 1$ and for any vertex $w \in V'$ with $f(u,w)=0$ or $f(v,w) = 0$ that $\Pr(\theta_w = 1 | H_t) = 0$. Then removing all edges adjacent to $u$ and $w$, adding $u,v$ and all neighbors $w$ to the set $V''$, and labeling the vertices with the worker types does not change the posterior probabilities that can be calculated from this graph. For any vertex $x \in V'$ adjacent to a vertex $w \in V'$ with $\Pr(\theta_w = 1 | H_t) = 0$, the edge label $f((x,w)) = 0$ regardless of $\theta_x$ and so removing this edge does not affect posterior probabilities. Thus \eqref{eq:min-graph} is sufficient to represent the history $H_t$. 
The proof for the Strongest Link setting is similar and follows from the same observations that repeated matchings and the order in which matchings occur do not affect the posterior distribution.
\end{proof}

\begin{proof}[Proof of Lemma~\ref{lma:weak-max-regret}.] 
When $N_1$ is even then every time a $0-1$ match is made, some other $0-1$ match is being made as well, where as the two high types should have been matched to each other. Thus there is a a loss in payoff of $1$ for every two $0-1$ pairs, i.e., a regret of $1/2$ per $0-1$ pair, or a regret of $1/(2N)$ per worker, per $0-1$ pair. Thus the result follows. When $N_1$ is odd, one $0-1$ match is inevitable at each time step and does not incur any loss in payoff. This attributing a regret of $1/2$ per $0-1$ pair over-estimates the regret by $\tau/(2N)$. But since $E(\tau)\leq \textup{o}(N)$, the result follows.
\end{proof}

\begin{proof}[Proof of Proposition~\ref{thm:lbeven}]
The proof follows multiple steps. First, observe that the random ability assignment where first the number of high types is drawn from a Binomial(N,p) distribution, and then the realized number is assigned uniformly across all possible assignments to the workers, has the same distribution as the i.i.d. ability assignment with probability p. In order to prove the result, first observe that 
\begin{align}
\textup{Regret}(\phi)\geq \textup{Regret}'(\phi)-\textup{o}(1)&=P(N_1\geq 2)\textup{E}[\textup{Regret}'(\bm{\theta},\phi, \tau)|N_1\geq 2] -\textup{o}(1)\\
&~~+P(N_1< 2)\textup{E}[\textup{Regret}'(\bm{\theta},\phi, \tau)|N_1< 2] -\textup{o}(1)\\
&=P(N_1\geq 2)\textup{E}[\textup{Regret}'(\bm{\theta},\phi, \tau)|N_1\geq 2] +0-\textup{o}(1)\\
&=(1-\textup{o}(1))\textup{E}[\textup{Regret}'(\bm{\theta},\phi, \tau)|N_1\geq 2]-\textup{o}(1).
\end{align}
We will thus prove a lower bound on $\liminf_{T \rightarrow \infty}\textup{E}[\textup{Regret}'(\bm{\theta},\phi, T)|N_1\geq 2]$. To do so, we prove a lower bound on $\liminf_{T \rightarrow \infty}\textup{E}[\textup{Regret}'(\bm{\theta},\phi, T)|N_1, N_1\geq 2]$, and then take conditional expectation on the event that $N_1\geq 2$ to derive the overall lower bound.  Fix any one of the $N-N_1$ low types. Call this type $l$. We will first show that the probability that the type $l$ gets matched to at least one high type is at least $(N_1-1)/N_1$.  

To see this, consider an assignment of types in which the low type $l$ does not get matched to any high type. Now for each such assignment, there are at least $N_1-1$ other assignments, obtained by swapping $l$ with each one of the $N_1-1$ high types that get identified, such that in every swapped assignment, $l$ gets matched to a high type (if $N_1$ is even, each high type will eventually get identified; if $N_1$ is odd, at least $N_1-1$ high types get identified). Further, no two assignments in which $l$ doesn't get matched to any high types lead to the same assignment after a swap. To see this, suppose that $i$, $j$ and $k$ are three different workers. Suppose that $h$ and $h'$ are two other high types. Suppose that assignment $1$ of these types is $i-h$, $j-l$ and $k-h'$, and assignment $2$ is $i-h'$, $j-h$ and $k-l$ (the assignment of types to all other workers is the same in the two assignments). Then swapping the assignments of $l$ and $h$ in assignment $1$ leads to the same assignment as that obtained by swapping assignments of $h'$ and $l$ in assignment $2$. Suppose that in assignment $1$, $l$ never gets matched to a high type. But since $h'$ gets matched to a high type in assignment $1$ (since $N_1$ is even), this means that in assignment $2$, $l$ will definitely get matched to a high type because the policy is unable to differentiate the two assignments until that happens (note that in assignment $2$, $h$ will not be matched to a high type till the policy notices the difference). Thus no two assignments in which $l$ doesn't get matched to any high types lead to the same assignment after a swap, because if that is the case, then in one of the assignments $l$ does get matched to a high type, which is a contradiction.

Thus if we denote the probability of $l$ being never matched to a high type as $x$, then we have $x+(N_1-1)x\leq 1$, which implies that $(1-x)\geq (N_1-1)/N_1$.

Next, fix a type assignment in which $l$ gets matched to exactly one high type that eventually gets identified. Let $h$ denote this high type. Thus in the assignment under consideration, after $h$ and $l$ are matched, $h$ gets matched to another high type. For every such type assignment there is another assignment where the assignments of $h$ and $l$ are swapped, in which after $l$ and $h$ get matched, $l$ gets matched to another high type before $h$ gets matched to another high type. The policy cannot tell the difference between these two assignments until this happens since the sequence of outcomes remains the same.
Thus for every assignment where $l$ gets matched to exactly one high type before being identified, there is an assignment where $l$ gets matched to at least 2 high types before getting identified. Moreover no two assignments lead to the same assignment after the swap. To see this, suppose that $i$, $j$ and $k$ are three different workers. Suppose that $h$ and $h'$ are two other high types. Suppose that assignment $1$ of these types is $i-h$, $j-l$ and $k-h'$, and assignment $2$ is $i-h'$, $j-h$ and $k-l$ (the assignment of types to all other workers is the same in the two assignments). Then swapping the assignments of $l$ and $h$ in assignment $1$ leads to the same assignment as that obtained by swapping assignments of $h'$ and $l$ in assignment $2$. Suppose that in assignment $1$, $h$ is the only high type that $l$ gets matched to. Then in assignment $2$, $h'$ and $h$ get matched before $l$ can get matched to $h'$ and are removed in the non-learning policy. Thus even if $l$ gets matched to exactly $1$ high type in assignment $2$, that high type is not $h'$ and hence the swap is not valid. Thus no two assignments in which $l$ gets matched to a single high type lead to the same assignment after the swap.

Now define the following quantities:
\begin{itemize}
\item Let $y_{N_1}$ be the probability that the type $l$ is matched to exactly one high type that eventually gets identified under the non-learning policy, 
\item  $z_{N_1}$ is the probability that the type $l$ is matched to two or more high types under the non-learning policy, 
\item $g_{N_1}$ is the probability that the type $l$ gets matched to exactly one high type, that never gets identified (this can happen only when $N_1$ is odd).
\end{itemize}
Then our argument above shows that $z_{N_1}\geq y_{N_1}$. Further $y_{N_1} + z_{N_1} + g_{N_1} = 1-x \geq (N_1-1)/(N_1)$. Thus $z  \geq (N_1-1)/N_1 - y_{N_1}-g_{N_1} \geq (N_1-1)/N_1 - z_{N_1}-g_{N_1}$. This means that $z_{N_1}\geq (N_1-1)/2N_1-g_{N_1}/2$.

Thus the expected number of high types that $l$ gets matched to under the policy is at least $y_{N_1} + g_{N_1}+ 2z_{N_1} \geq (N_1-1)/N_1 + z_{N_1} \geq 3(N_1-1)/2N_1- g_{N_1}/2$.

Thus $1/(2N)$ times the expected number of high-low matches across all types conditional on $N_1$ is, 
\begin{align*}
\frac{1}{2}\bigg(1-\frac{N_1}{N}\bigg)\bigg(\frac{3(N_1-1)}{2N_1}-\frac{g_{N_1}}{2}\bigg)&= \frac{1}{2}\bigg(1-\frac{N_1}{N}\bigg)\bigg(\frac{3}{2}(1-\frac{1}{N_1})-\frac{g_{N_1}}{2}\bigg)\\
&\geq \frac{1}{2}\bigg(1-\frac{N_1}{N}\bigg)\frac{3}{2} -\frac{3}{2N_1}-\frac{g_{N_1}}{2}.
\end{align*}
Thus, 
\begin{align*}
&\textup{E}[\textup{Regret}_N(\bm{\theta},\phi, \tau)|N_1\geq 2]\\
&~~\geq \textup{E}\bigg[\bigg(1-\frac{N_1}{N}\bigg)\frac{3}{4} -\frac{3}{4N_1}-\frac{g_{N_1}}{4}\,\bigg|\, N_1\geq 2\bigg] \\
&~~= \frac{3}{4}(1-p) -\textup{E}\bigg[\frac{3}{4N_1}\,\bigg|\, N_1\geq 2\bigg] - \textup{E}\bigg[\frac{g_{N_1}}{4}\,\bigg|\, N_1\geq 2\bigg] \\
&~~= \frac{3}{4}(1-p) -\textup{o}(1)- \textup{E}\bigg[\frac{g_{N_1}}{4}\,\bigg|\, N_1\geq 2\bigg] .
\end{align*}
We finally show that $\textup{E}[g_{N_1}\,\mid\, N_1\geq 2]=\textup{o}(1)$, which completes the proof. This quantity is the probability that a type $l$ gets matched to exactly one high type and that high type is the one that never gets identified. Suppose that this probability is $\textup{O}(1)$. Then the expected number of such matches till the time the algorithm finishes learning is $N(1-p)\textup{O}(1)=\textup{O}(N)$, which contradicts the fact that the algorithm learns in $\textup{o}(N)$ time steps in expectation (since a high type that doesn't get identified gets matched to a low type exactly once in each time step).
\end{proof}

\begin{proof}[Proof of Proposition~\ref{thm:missing}]
We need the following two lemmas to establish the proof.
\begin{lemma}\label{lemma:ProbCliqueHigh}
At the beginning of any epoch of the Exponential Cliques algorithm, the probability that a clique of size $m$ contains a high type worker is 
\begin{equation*}
\frac{m p}{m p + 1-p}.
\end{equation*}
\end{lemma}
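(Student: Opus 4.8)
The key structural fact is that in Exponential Cliques two cliques are merged only after \emph{every} cross-pair between them has been matched and \emph{none} returned payoff $1$. An immediate induction on the epoch index then shows that whenever a set $C$ of $m$ workers forms a clique of the unknown worker graph at the start of an epoch, \emph{all} $\binom{m}{2}$ pairs inside $C$ have been matched and every one of them returned payoff $0$; the same holds for the ``unpaired clique,'' whose eventual merge step tests the complete bipartite family of cross-pairs. Under the Weakest Link payoff a matched pair returns $0$ iff the two workers are not both high, so I would conclude: a clique of size $m$ in the unknown worker graph contains \emph{at most one} high-type worker.

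The heart of the argument is to show that conditioning on ``$C$ is a clique of size $m$ in the unknown graph'' leaves the $m$ types inside $C$ distributed as $m$ i.i.d.\ $\textup{Bernoulli}(p)$ variables conditioned on having at most one $1$. For this I would use a deferred-decisions view: the algorithm's pairing choices at each epoch are uniform and independent of the types, and a prescribed worker set $C$ is assembled into a single clique exactly when the relevant pairing choices occur \emph{and} the relevant merges survive. Given that the types on $C$ have at most one $1$, all merges internal to $C$ automatically succeed (a lone high worker is never cross-matched with another high worker, hence no payoff $1$ is produced), and — this is the point — whether $C$ actually gets assembled and survives depends, beyond the at-most-one-$1$ constraint, only on the pairing randomness and on the types of workers \emph{outside} $C$ (which control how many other cliques survive, and hence the pairing probabilities). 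Therefore $\Pr(C\text{ is a size-}m\text{ clique}\mid \text{types on }C=\vec t)$ is a common constant over all $\vec t\in\{0,1\}^m$ with $\sum_i t_i\le 1$ and is $0$ otherwise, and Bayes' rule yields the claimed conditional law. A one-line computation then finishes it: with $q=1-p$,
\[
\Pr\big(C\text{ contains a high worker}\mid C\text{ is a size-}m\text{ clique}\big)
=\frac{\Pr(\text{exactly one }1)}{\Pr(\text{at most one }1)}
=\frac{m p q^{m-1}}{q^{m}+m p q^{m-1}}
=\frac{m p}{m p+1-p}.
\]

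As a cross-check I would note the same formula follows from a direct recursion on epochs: letting $\alpha_k$ be the probability that a size-$2^k$ clique present at the start of epoch $k$ contains a high worker, one has $\alpha_0=p$, and merging two such cliques yields a surviving size-$2^{k+1}$ clique only if at most one of the two is ``mixed,'' so $\alpha_{k+1}=\frac{2\alpha_k(1-\alpha_k)}{1-\alpha_k^{2}}=\frac{2\alpha_k}{1+\alpha_k}$; substituting $\alpha_k=2^kp/(2^kp+q)$ verifies the recursion and gives $\alpha_{k+1}=2^{k+1}p/(2^{k+1}p+q)$.

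I expect the main obstacle to be the middle step — the bookkeeping that shows conditioning on ``$C$ is a size-$m$ clique'' does not distort the within-$C$ type distribution beyond forcing at most one $1$. The combinatorial observation and the final arithmetic are routine; the care lies in cleanly separating the type-dependence of the clique-formation event into an internal part (exactly the $\le 1$ constraint, which is uniform across admissible configurations) and an external part depending only on the pairing randomness and on workers outside $C$.
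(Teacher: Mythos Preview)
Your proposal is correct and takes essentially the same approach as the paper: observe that a surviving clique can contain at most one high-type worker, and then compute $\Pr(\text{exactly one }1\mid\text{at most one }1)=mpq^{m-1}/(q^m+mpq^{m-1})=mp/(mp+1-p)$. The paper's proof is in fact only those two steps stated tersely; your deferred-decisions justification of why the conditioning reduces exactly to the at-most-one-$1$ constraint, and your recursion cross-check $\alpha_{k+1}=2\alpha_k/(1+\alpha_k)$, are additional rigor beyond what the paper supplies but do not constitute a different route.
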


\begin{proof}
First note that the initial worker types are independent and distributed according to $\textup{Bernoulli}(p)$. Then a subset of $j$ workers has $X$ high type workers, where $X \sim \textup{Binomial}(j, p)$. If a clique is in the unknown graph, then there can be at most one high type worker in the clique. Then the probability that the clique contains exactly one high type worker, given the fact that it contains either one or zero, is $\frac{m (1-p)^{m-1}p}{(m (1-p)^{m-1}pm  + (1-p)^m} = \frac{mp}{mp + 1-p}$. 
\end{proof}

\begin{lemma}\label{lemma:VarianceBound}
Given any $N \in \mathcal{N}$ and any sequence $\lambda_1, \lambda_2, \ldots$ such that $\lambda_i \in [0,1]$ for all $i \in \{1,2,\ldots \}$, define the random variables $\beta_t$ as follows:
\begin{align*}
\beta_{t} &\sim Bin(\lfloor \frac{\beta_{t-1}}{2} \rfloor, \lambda_t), \forall t \in \{1,2,\ldots\}, \\
\beta_0 &= N.
\end{align*}
Then for all $t  \in \{1,2,\ldots \}$,
\begin{align*}
\textup{E}[\beta_t] &\leq \frac{N}{2^t}\Lambda_t, \\
\textup{E}[\beta_t^2] &\leq  \left( \frac{N}{2^t}\Lambda_t \right)^2 + \frac{N}{2^t}\Lambda_t(1 - \Lambda_t),
\end{align*}
where $\Lambda_t=\prod_{j=1}^t\lambda_j$.
\end{lemma}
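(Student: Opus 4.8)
The plan is to prove the two inequalities together by induction on $t$, using $t=0$ as the base case: there $\beta_0 = N$ is deterministic and $\Lambda_0 = 1$ (empty product), so both bounds hold with equality. For the inductive step, I would condition on $\beta_{t-1}$ and exploit that, given $\beta_{t-1}$, the variable $\beta_t$ is $\textup{Bin}(\lfloor \beta_{t-1}/2 \rfloor, \lambda_t)$; hence $\textup{E}[\beta_t \mid \beta_{t-1}] = \lfloor \beta_{t-1}/2 \rfloor \lambda_t$ and, by the binomial variance identity, $\textup{E}[\beta_t^2 \mid \beta_{t-1}] = \lfloor \beta_{t-1}/2 \rfloor^2 \lambda_t^2 + \lfloor \beta_{t-1}/2 \rfloor \lambda_t (1-\lambda_t)$. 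The only facts I would need about the floor are the trivial bounds $\lfloor \beta_{t-1}/2 \rfloor \le \beta_{t-1}/2$ and $\lfloor \beta_{t-1}/2 \rfloor^2 \le \beta_{t-1}^2/4$, which may be applied inside the expectation because the coefficients $\lambda_t^2$ and $\lambda_t(1-\lambda_t)$ are nonnegative.

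Taking total expectations, the mean bound is immediate: $\textup{E}[\beta_t] \le \tfrac{\lambda_t}{2}\,\textup{E}[\beta_{t-1}] \le \tfrac{\lambda_t}{2}\cdot \tfrac{N}{2^{t-1}}\Lambda_{t-1} = \tfrac{N}{2^t}\Lambda_t$. For the second moment, write $M_t := \tfrac{N}{2^t}\Lambda_t$, so that $M_t = \tfrac{\lambda_t}{2}M_{t-1}$ and $\Lambda_t = \lambda_t \Lambda_{t-1}$. Applying both induction hypotheses to $\textup{E}[\beta_{t-1}]$ and $\textup{E}[\beta_{t-1}^2]$ after taking the outer expectation gives
\[
\textup{E}[\beta_t^2] \;\le\; \frac{\lambda_t^2}{4}\bigl(M_{t-1}^2 + M_{t-1}(1-\Lambda_{t-1})\bigr) + \frac{\lambda_t(1-\lambda_t)}{2}M_{t-1} \;=\; M_t^2 + \frac{\lambda_t}{2}M_{t-1}\Bigl[(1-\lambda_t) + \tfrac{\lambda_t}{2}(1-\Lambda_{t-1})\Bigr].
\]
It then suffices to check the scalar inequality $(1-\lambda_t) + \tfrac{\lambda_t}{2}(1-\Lambda_{t-1}) \le 1 - \lambda_t\Lambda_{t-1}$, since the right-hand side times $\tfrac{\lambda_t}{2}M_{t-1}$ equals $M_t(1-\Lambda_t)$. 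Expanding, this is equivalent to $\lambda_t\Lambda_{t-1} \le \lambda_t$, hence to $\Lambda_{t-1}\le 1$, which holds because each $\lambda_j\in[0,1]$. Substituting back closes the induction.

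I do not expect a genuine obstacle: this is a routine second-moment induction, and the structure (binomial conditional, drop the floor, invoke the hypothesis) is standard. The only places that warrant care are bookkeeping ones — confirming that the squared term collapses to exactly $M_t^2$, that the leftover simplifies to the one-line inequality $\Lambda_{t-1}\le 1$, and that the floor is relaxed only against nonnegative multipliers — together with the minor point that the induction hypotheses are applied to the random $\beta_{t-1}$ only after passing to expectations, not pathwise. Degenerate inputs ($N=0$ or some $\lambda_j=0$) force $M_{t-1}=0$ and are absorbed automatically, since then the residual bracket term vanishes.
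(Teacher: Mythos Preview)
Your proposal is correct and follows essentially the same route as the paper's own proof: induction on $t$, conditioning on $\beta_{t-1}$, using the binomial first and second moments, relaxing the floor via $\lfloor \beta_{t-1}/2\rfloor \le \beta_{t-1}/2$, and then substituting the induction hypotheses. Your reduction of the residual algebra to the single scalar inequality $\Lambda_{t-1}\le 1$ is in fact a slightly cleaner packaging of the same computation the paper carries out line by line.
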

\begin{proof}
We will prove the above lemma by induction. 
Note that for $t=1$, the proof is straightforward as $\beta_1$ has the distribution $Bin(N/2,\lambda_1)$.

Also, for any $t$, the fact that $\textup{E}[\beta_t] \leq \frac{N}{2^t}\Lambda_t$ is fairly straightforward. For example, it follows from a coupling argument with a random variable of type $Bin(\lfloor \frac{N}{2^t} \rfloor, \Lambda_t)$.

Assume the lemma is true for $1 \leq t \leq k$. Then,
\begin{align*}
\textup{E}[\beta_{k+1}^2] &= \textup{E}\left[ \textup{E}[\beta_{k+1}^2 | \beta_k ] \right] \\
&= \textup{E}\left[ \lambda_{k+1}^2 {\lfloor \frac{\beta_k}{2} \rfloor}^2 + \lambda_{k+1}(1 - \lambda_{k+1})\lfloor \frac{\beta_k}{2} \rfloor \right] \\
&\leq \frac{\lambda_{k+1}^2}{4} \textup{E}[\beta_k^2] + \frac{1}{2}\lambda_{k+1}(1 - \lambda_{k+1}) \textup{E}[\beta_k].
\end{align*}

By the induction assumption, we have $\textup{E}[\beta_k^2] \leq \left( \frac{N}{2^k}\Lambda_k \right)^2 + \frac{N}{2^k}\Lambda_k(1 - \Lambda_k)$.  

Also remember that $\Lambda_t \lambda_{t+1} = \Lambda_{t+1}$.

Therefore, we get that
\begin{align*}
\textup{E}[\beta_{k+1}^2] &\leq \frac{\lambda_{k+1}^2}{4} \left( \left( \frac{N}{2^k}\Lambda_k \right)^2 + \frac{N}{2^k}\Lambda_k(1 - \Lambda_k) \right) + \lambda_{k+1}(1 - \lambda_{k+1}) \Lambda_k \frac{N}{2^k} \\
&= \Lambda_{k+1}^2 \left( \frac{N}{2^{k+1}} \right)^2 + \frac{1}{2}\lambda_{k+1}\Lambda_{k+1}(1 - \Lambda_k)\frac{N}{2^{k+1}} + \Lambda_{k+1}(1 - \lambda_{k+1})\frac{N}{2^{k+1}} \\
&= \left( \frac{N}{2^{k+1}} \Lambda_{k+1}\right)^2 + \frac{N}{2^{k+1}} \Lambda_{k+1} \left( \frac{1}{2}\lambda_{k+1}(1 - \Lambda_k) + 1 - \lambda_{k+1}\right) \\
&= \left( \frac{N}{2^{k+1}} \Lambda_{k+1}\right)^2 + \frac{N}{2^{k+1}} \Lambda_{k+1} \left( 1 - \frac{1}{2}(\lambda_{k+1} + \Lambda_{k+1}) \right) \\
&\leq \left( \frac{N}{2^{k+1}} \Lambda_{k+1}\right)^2 + \frac{N}{2^{k+1}} \Lambda_{k+1} \left( 1 - \Lambda_{k+1} \right),
\end{align*}
and thereby completing the induction argument.
\end{proof}

We now prove Proposition~\ref{thm:lblargep}. First, note that under Exponential Cliques, at each epoch we have two types of cliques: Type A consisting of all 0 workers and Type B consisting of a single 1 worker, the remaining workers being 0. Each 0 worker starts in a singleton clique of type A. There are two steps to each 0 worker being identified. It first becomes a part of a type B clique, and in the process gets matched to a 1 worker. In the second step, this type B clique gets matched to another type B clique. In this step, there are two possibilities: either the 0 worker gets matched to the lone 1 worker of the other clique, or before that happens, the two lone 1 workers in the two cliques get matched to each other, thus identifying everyone in the two cliques. In the first case, the 0 worker gets matched to a 1 exactly once more, while in the latter case it doesn't. The two possibilities are equiprobable. Thus in expectation, each 0 worker gets matched to $3/2$ 1 workers, leading to a regret of $3/4$ per 0 worker. Thus the total regret per worker is upper bounded by $3/4(1-p)$. 

But this argument assumes that there are no unpaired cliques at any epoch. If a 0 worker is a part of an unpaired clique of type $B$, then it gets matched to a $1$ worker in that clique more than once. The contribution of such a match to the overall regret can at most be $1/(2N)$ at every time step (since there is at most one 1 worker in an outlier clique at any time step). But over $k \approx \sqrt{2\log{N}}$ epochs, the total number of time steps is $\textup{o}(N)$. Thus the contribution to the regret from the outlier cliques throughout the run of Exponential cliques is $\textup{o}(1)$.  

What remains is the contribution to the regret from the workers left unidentified after the run of Exponential cliques, i.e., when there are randomly matched to each other. the rest of the proof shows that the contribution to regret from this phase is $\textup{o}(1)$ as well.

Let $W_k$ denote the number of unknown workers at the start of epoch $k$.  If we stop Exponential Cliques at epoch $k$ and afterward complete all remaining matches as fast as possible, even if all these matches incur regret, the total regret is $W_k(W_k-1)/2$ (which will take $W_k$ time steps). Thus, the expected per-worker regret incurred after stopping Exponential Cliques is at most $ E(W_k^2) / N$. We show that if we stop Exponential Cliques at epoch $K \approx \sqrt{2\log(N)}$, $\textup{E}(W_K^2)/N = \textup{o}(1)$. Note that this would also imply that $\textup{E}(W_K) = \textup{o}(N)$ and this the number of steps required to finish learning every workers type after EC stops is $\textup{o}(N)$. Thus, overall $E(\tau)=\textup{o}(N)$.

Let $C_k$ denote the number of unknown cliques at the start of epoch $k$. Then 
\begin{equation}\label{eq:WorkerToClique}
\textup{E}[W_k ^2]\leq  \textup{E}[2^{2k} C_k^2]. 
\end{equation}

Let $D_k$ be the number of cliques of size exactly $2^k$. When two unknown cliques are matched, then they give rise to an unknown clique in the next epoch if they are not both cliques of high type\footnote{A high type clique contains a worker of type 1.}. Therefore, we know that
\begin{align*}
D_{k} &\sim Bin(\lfloor \frac{D_{k-1}}{2} \rfloor, 1 - q_k^2), \forall k \in \{1,2,\ldots\},
\end{align*}
where $q_k = \frac{2^kp}{2^kp + 1-p}$.

Note that all the cliques are of size $2^k$, except possibly just one of smaller size. The presence of an unpaired clique of size $2^k$ or a clique with fewer than $2^k$ workers (and there can be only one such clique), will remain at the end of the epoch and that all future unpaired cliques will be merged with this one. Hence,
\begin{align*}
C_k \leq D_k + 1.               
\end{align*}

All that remains to be done is to show that 
\begin{align*}
\textup{E}[2^{2k} C_k^2] = 2^{2k} (\textup{E}[D_k^2]+2\textup{E}[D_k]+1) = o(N).
\end{align*}
Let $Q_k = \prod_{i=1}^{k-1}(1 - q_i^2)$. Invoking Lemma \ref{lemma:VarianceBound}, we know that
\begin{align*}
\textup{E}[D_k] &\leq \frac{N}{2^k} Q_k, \\
\textup{E}[D_k^2] &\leq \left( \frac{N}{2^k} Q_k\right)^2 + \frac{N}{2^k} Q_k(1 - Q_k) \leq \left( \frac{N}{2^k} Q_k\right)^2 + \frac{N}{2^k} Q_k.
\end{align*}
Utilizing these inequalities in the equation above, we now have to show that
\begin{align*}
2^{2k} \left( \left( \frac{N}{2^k} Q_k\right)^2 + 3\frac{N}{2^k} Q_k + 1 \right) = o(N),
\end{align*}
or equivalently,
\begin{align}\label{eqn:tally}
N^2 Q_k^2 + 3\cdot 2^k \cdot NQ_k + 2^{2k} = o(N),
\end{align}
Note that
\begin{align*}
Q_k &= \prod_{i=1}^k (1-q_i^2) 
        = \prod_{i=1}^k \frac{(1-p)(2^{i+1}p + 1 - p)}{(2^ip+1-p)^2} \leq \prod_{i=1}^k \frac{2^{i+1}p + 1}{2^{2i}p^2}\\
        & \leq  \prod_{i=1}^k \frac{2^{i+1}(1+ p) }{2^{2i}p^2} \leq \prod_{i=1}^k \frac{f(p)}{2^{i-1}} = 2^{-\frac{k^2}{2} + ( 1/2 - \log(f(p)))k}
        = 2^{-\frac{k^2}{2} + c(p)k},
\end{align*}
where $f(p)$ is a scalar dependent on $p$ alone, and so also $c(p)$.

Using the above, and the fact that $k \approx\sqrt{2\log{N}}$, we need to prove equation \ref{eqn:tally}.
Going over the terms on the right hand side of the same equation:
\begin{align*}
N^2 Q_k^2 &= N^2 2^{-k^2 + 2c(p)k} = N^2 2^{-2\log N} 2^{2c(p)\sqrt{2\log N}} = 2^{2c(p)\sqrt{2\log N}}, \\
3\cdot 2^k \cdot NQ_k &= 3\cdot  2^k N 2^{-\frac{k^2}{2} + c(p)k} = 2^{(1+c(p))\sqrt{2\log N}}, \\
2^{2k} &= 2^{2\sqrt{2\log N}}, \\
\end{align*}
each of which is clearly $\textup{o}(N)$.
\end{proof}

\begin{proof}[Proof of Proposition~\ref{thm:lblargep}]
Consider an optimal policy $\phi$. Recall that $N_1$ denotes the number of high quality workers. The platform must match at random at $t=0$. Let $K_1$ denote low quality workers matched to a high quality worker at this random matching stage. This means that $N_1 - K_1$ high quality workers were paired with another high quality worker, and were thus identified, and $K_1$ high quality workers are still unknown.  Let $K_2$ denote the number of low-quality workers that are matched to a high-quality worker at $t=0$ and a second high-quality worker at $t=1$. We show for the policy $\phi$ that $\E(K_2) \geq  N (p^2-p^3)/(1+p)$.  

First consider $\E(K_2 | K_1, N_1)$. An algorithm minimizes $K_2$ by pairing workers in the unknown graph amongst themselves and not with the known high type workers. Notice that the unknown population at $t=1$ consists of $K_1$ high quality workers and $N-N_1$ low quality workers. Since each of the $K_1$ low quality workers paired with a high quality worker at $t=0$ now has a $\frac{K_1}{N-(N_1-K_1) - 1}$ probability of getting matched to another high quality worker, 

 $$\E(K_2 | K_1, N_1) \geq K_1 \frac{K_1}{N-(N_1-K_1) - 1} $$.
 Using the fact that $\E(K_1|N_1) = (N-N_1) \frac{N_1}{N-1}$ and taking the expectation with respect to $K_1$, we get
 $$ \E(K_2|N_1) \geq \frac{[(N-N_1)(N_1)/(N-1)]^2}{N-N_1 - (N-N_1)(N_1)/(N-1) - 1}. $$

 Then $\E(N_1) = N\cdot p $ gives us 
 $$ \E(K_2) \geq N \cdot \frac{p^2-p^3}{1+p} + o(N). $$
 Now $E(K_1) = Np(1-p)$. Thus the total per worker regret is at least  $1/(2N)(\E(K_1)+\E(K_2))= (1/2)p^2(1-p)/(1+p) + (1/2)p(1-p) +\textup{o}(1)$, which simplifies to the expression in the statement. 
\end{proof}

\begin{proof}[Proof of Proposition~\ref{thm:ubec-l1}]
We calculate the expected decrease in regret by modifying the $k$-stopped Exponential Cliques algorithm to include one round of learning in the first epoch. 

Let $1, \ldots, M$ denote the workers included in cliques in the exploration set at this first epoch; the remaining workers $M+1, \ldots, N$ will proceed according to $k$-stopped Exponential Cliques.  For $i=1, \ldots, M$ let $X_i$ denote the per worker regret obtained  For the remaining workers $i=M+1, \ldots, N$ let $Y_i$ denote the regret obtained through $k$-stopped Exponential Cliques.

The index $K$ depends only on $N$ and the number of workers identified in the first time step. The regret thatt worker $i$ in the exploration set, and thus in the unknown graph at the beginning of the second epoch, incurs is independent of the realization of the workers in the first time step. Furthermore the $X_i$ are independent and identically distributed. Thus by finiteness of $\E[K]$ and $\E[X_i]$, Wald's equation concludes that 

\begin{equation}
\E[\sum_{i=1}^K X_i] = \E[K] \cdot \E[X_i] .
\end{equation}

Similarly, since the regret $Y_i$ incurred by any worker not in the exploration set is also independent from the number of workers revealed in the first time step, Wald's equation and our previous analysis of Exponential Cliques (without learning) gives

\begin{equation}
\E[\sum_{i=K+1}^N Y_i] = \E[N-K] \cdot \E[Y_i] = \E[N-K] \cdot \frac{3(1-p)}{4N}.
\end{equation}

Consider $\E[K]$. We number of pairs in the exploring set is $\min\{2W, \lfloor \frac{N-2W}{2}\rfloor\}$ where $W$ denotes the number of high type workers found by matching at random at $t=0$. Since $p < 1/3$, we are guaranteed that $\min\{2W, \lfloor \frac{N-2W}{2}\rfloor\} = 2W$. Then $\E[K] = 2 * \E[2W] = 2Np^2$.

Now we calculate $\E[X_i]$. Recall that distributed learning matches two unknown pairs together, so there are three possible cases.
\begin{enumerate}
\item Two high type cliques: In the first step, if both low type workers are learned by the known high types, there are 4 bad matches. If both high type workers are learned by the known workers, there are two bad matches. Otherwise, there is an average of 4 high-low matches made per low type worker. This gives an expected average of 7/4 high-low matches per low type worker an average regret of 7/8 per low type worker. 
\item One high type clique, one low type clique: The high type worker can be learned in the first or second step. If it is learned in the first, there are 3 bad matches, otherwise there are 4. This gives an expected average of 7/6 high-low matches per low type worker and an average regret of 7/12 per low type worker.
\item Two low type cliques: There are four matches made between high and low type workers, giving an average of 1 high-low match per low type worker and thus an average regret of 1/2 per low type worker. 
\end{enumerate}

Thus among the exploring set, the expected per person regret is 
\begin{align*} E[X_i] &= \frac{(2/4) (7 / 8)}{N} * P\big[(\theta^{(C_1)}, \theta^{(C_2)}) = (1,1)\big] \\
& + \frac{(1/2)}{N} *P\big[(\theta^{(C_1)}, \theta^{(C_2)}) = (0,0)\big] \\
&+ \frac{(3/4) (7 / 12)}{N} * P\big[(\theta^{(C_1)}, \theta^{(C_2)}) = (1,0)\big].\\
\end{align*}

At the beginning of the second epoch, the probabilities for a clique having a high-quality worker and not having such a worker are given by the following equations:

\begin{equation}\label{prob high type clique}
 \E[P(\theta^{(C_1)} = 1)] = \frac{2p}{1+p}.
\end{equation}

\begin{equation}\label{prob low type clique}
\E[P(\theta^{(C_1)} = 0)] = \frac{1-p}{1+p}.
\end{equation}

It then follows that for $p< 1/3$, $\E[X_i] = 3/2 - c$ for some $c$ that does not depend on $N$. Then (2) and (3) together imply that 

$$\E[\sum_{i=1}^K X_i + \sum_{i=K+1}^N Y_i] < \frac{3}{4}(1-p)  + 2p^2 [c - \frac{3(1-p)}{4}] < \frac{3}{4}(1-p).$$
where $c$ is a constant that does not depend on $N$. 

\end{proof}

\begin{proof}[Proof of Lemma~\ref{lma:char of max regret}.] 
For $p < 0.5$, the fraction of type 1 workers realized is less than $0.5$ with probability $1 - o(1)$. And whenever the fraction (say $s$) of realized type 1 workers is less than $0.5$, i.e., there are more type 0 workers than there are type 1 workers, the maximum payoff that an omniscient algorithm can generate in any given time step is $Ns$. This is obtained by allocating each type 1 worker to a different job, and the maximum number of jobs you can satisfy in this case is $Ns$. 

It is easy to see, for large enough $N$, that $\mathbb{E}[Ns] = Np - \textup{o}(N)$ (by a regular concentration argument). And therefore asymptotically, the payoff per worker (i.e., total payoff normalized by $N$) generated by an omniscient algorithm is $p - \textup{o}(1)$ per time step. Thus,

\begin{align*}
\mathbb{E} \left[ \frac{\tau}{N} \max_{A\in \Psi_N}\sum_{(i,j)\in A}P(\theta_i, \theta_j) \right] &= \frac{1}{N} \mathbb{E} \left[\sum_{t = 1}^\tau \left[ Np - \textup{o}(N) \right] \right] 
=  - \textup{o}(1) \mathbb{E}[\tau] + \mathbb{E} \left[\sum_{t = 1}^\tau Np\right] \\
&= \frac{1}{N} \mathbb{E} \left[\sum_{t = 1}^\tau Np\right] - \textup{o}(1).
\end{align*}
The above calculation holds because for the algorithms that we consider $\mathbb{E}(\tau)\leq\textup{O}(1)$.

For any fixed policy $\phi$, at time $t$, let $X_t$ the number of pairs that output a payoff of 1, and $Y_t$ the number of type 1 workers. We know that $\mathbb{E}Y_t = Np$ by our i.i.d. Bernoulli assumption. The asymptotic regret of $\phi$ can be calculated as follows:

\begin{align*}
\textup{Regret}_N(\bm{\theta},\phi, \tau) &= \frac{1}{N} \mathbb{E} \left[ \sum_{t = 1}^\tau  Np \right] - \frac{1}{N} \sum_{t=1}^\tau \sum_{(u,v) \in A(t)} P(\theta_u, \theta_v) - \textup{o}(1)\\
&= \frac{1}{N} \mathbb{E}\left[ \sum_{t = 1}^\tau Y_t - \sum_{t=1}^\tau X_t \right] = \frac{1}{N} \mathbb{E} \left[ \sum_{t = 1}^\tau (Y_t - X_t) \right]- \textup{o}(1).
\end{align*}

To interpret the term $Y_t - X_t$ in an interesting and useful way: observe that $X_t$ is total number of pairs that output a payoff of 1, i.e., the number of pairs (say $a_t^{0,1}$) of type $0-1$ (i.e., having just one worker of type 1), and the number of pairs (say $a_t^{1,1}$) of type $1-1$ (i.e., having two worker of type 1), whence $X_t = a_t^{0,1} + a_t^{1,1}$. And $Y_t$ is the number of workers of type 1, i.e., $Y_t = a_t^{0,1} + 2a_t^{1,1}$. $Y_t - X_t$ is then equal to $a_t^{1,1}$, or equivalently, the number of pairs of type $1-1$. Therefore: 
\begin{align}
\textup{Regret}_N(\theta,\phi, \tau) &= \frac{1}{N} \mathbb{E} \left[ \sum_{t = 1}^\tau  a_t^{1,1} \right]- \textup{o}(1)\\
&=\textup{Regret}''_N(\theta,\phi, T) - \textup{o}(1).
\end{align}

An analogous calculation (we omit details for the sake of brevity) implies that for $p > 0.5$, normalized regret can be calculated as: $\textup{Regret}_N(\theta,\phi, T) = \frac{1}{N} \mathbb{E} \left[ \sum_{t = 1}^\tau  a_t^{0,0} \right]$, where $a_t^{0,0}$ is the number of pairs of type $0-0$ at time $t$.
\end{proof}

\begin{proof}[Proof of Proposition~\ref{thm:2chain}.\ref{thm:2chain_a}]
Given two unknown pairs, i.e., of type either $0-1$ or $1-1$, we compare the performance of $1$-chain and $2$-chain learning on these. Let $q$ be the probability that an unknown pair is of type $0-1$. From a simple calculation we know that $q = \frac{2(1-p)}{2-p}$.

Let i-j and k-l be the two unknown pairs. Given a known $0-0$ pair, the 2-chain distributed learning algorithm tests the pairs 0-$i$, l-$0$, and $j-k$. Let the outputs of these pairs be A,B and C respectively.

There are four events $E_1$,$E_2$,$E_3$ and $E_4$:
\begin{enumerate}
\item $E1$: $A=B=1$ and $C=0$: In this case, we know immediately that $i=l=1$, and $j=k=0$, 
\item $E2$: $A=B=1$ and $C=1$, In this case, we know that $i=l=1$, and $j-k$ could be of type either $0-1$ or $1-1$. 
\item $E3$: $A=B=0$ and $C=1$:  In this case, we know that $i=l=0$, and $j=k=1$. 
\item $E4$: $A=1,B=0$ or $A=0,B=1$ and $C=1$: In this case, either $i=1, l=0$ or $i=0, l=1$. When $i=1, l=0$, we know that $k=1$, since $k-l$ is of type $0-1$ or $1-1$ to begin with, and $j$ could be either 0 or 1. The analogous observation holds when $i=0, l=1$.  
\end{enumerate}

 Imagine running the 1-chain learning policy on some instance. Recall that 1-chain learning takes a known $0-0$ pair, and an unknown pair, say $i-j$ which is known to be of type $0-1$ or $1-1$, and tests the pairs 0-$i$ and $j$-0. Let's say there is another unknown pair $k-l$ (of type $0-1$ or $1-1$) that is retained as is by the 1-chain policy in the current step, and in the next step is tested across a known $0-0$ pair (like $i-j$ was previously). Hypothetically, do a 2-chain replacement on $i-j$ and $k-l$ taken together, where in the case of event $E_2$ we test the pairs 0-$j$ and $k$-0 in the next time step (i.e., mimicking the 1-chain policy). We first calculate how this replacement affects the accumulation of regret. We do this by conditioning on each of the events $E1$ through $E4$. 
 
 Recall that regret can be computed by looking at just the $0-0$ pairs formed, by Lemma \ref{lma:char of max regret}. In each of these events, denote the regret incurred by 1-chain learning and 2-chain learning as $X_i$ and $Y_i$ respectively. We now describe how we compare the regret accrued by the two policies.

Under event $E_1$, since one only $0-0$ pair ($j-k$) is formed by 2-chain Learning, we have $Y_1 = 1$. And since 1-chain learning forms two $0-0$ pairs, one with $i-j$, and one with $k-l$, it incurs a regret $X_1 = 2$. 

Under event $E_2$, no $0-0$ pair is formed by the 2-chain replacement. In this case, $j-k$ is of type $0-1$ or $1-1$ with probability $q$ and $1-q$ respectively, and therefore is indistinguishable from any other unknown pair of type $0-1$ and $1-1$. As such the 2-chain replacement block tests $j-k$ in the next time step with 0-$j$ and $k$-0 pairs, and the regret incurred here is equal to $q$. Therefore, $Y_2 = 1-q$. Of the two pairs $i-j$ and $k-l$, one of the two is guaranteed to be $1-1$, and the other is of type $0-1$ with probability $q = \frac{2(1-p)}{2-p}$. In the case both are of type $1-1$, we have no $0-0$ pairs formed by doing the 1-chain policy. In the case that one of them is of type $0-1$, we have one $0-0$ pair formed by the 1-chain policy. Since the latter event occurs with probability $q$, we have that $X_2 = q$.

Under event $E_3$, $Y_3 = 2$, since two $0-0$ pairs are formed in the 2-chain replacement. And $X_3 = 2$ since we have two $0-1$ type pairs, and the 1-chain makes two $0-0$ pairs in learning them.  

Under event $E_4$, 2-chain learning makes one $0-0$ pair. Also, one of $i,l$ is 0 and the other is 1. Let's assume $i=1$ and $l=0$, which implies that $k=1$. $j$ can be of either type, and by our i.i.d. Bernoulli model, is 1 with probability $p$ and $0$ otherwise. The 2-chain replacement next makes the pair $0-j$, and this is $0-0$ with probability $1-p$ - there is regret here only when $j=0$ . Therefore, we have $Y_4 = 1 + (1-p) = 2-p$. For the 1-chain policy, since $k-l$ is of type $1-1$, and $j-k$ is of type $0-1$ with probability $1-p$, the regret incurred is also $X_4 = 1 + (1-p) = 2-p$. 

We see that $Y_1 > X_1$ and $Y_2=X_2,Y_3=X_3,Y_4=X_4$. Taking the expectation over all the events, we have that the 2-chain replacement lowers regret when applied to the 1-chain policy. By doing repeated 2-chain replacements, we get that the 2-chain policy has lower regret than the 1-chain policy. Note that our 2-chain replacement (on $i-j$ and $k-l$) mimics the 1-chain policy on $j-k$ in the next step in the case of event $E_2$. However, since the pair $j-k$ has the same distribution as any other pair that is known to be of type $0-1$ or $1-1$, we can do a repeated application of our 2-chain replacement if needed.
\end{proof}

\begin{proof}[Proof of Proposition~\ref{thm:2chain}.\ref{thm:2chain_b}]
Given two unknown pairs, i.e., of type either $0-1$ or $1-1$, we compare the performance of $1$-chain and 2-chain learning on these. Let $q$ be the probability that an unknown pair is of type $0-1$. From a simple calculation we know that $q = \frac{2(1-p)}{2-p}$.

Let i-j and k-l be the two unknown pairs. Given a known $0-0$ pair, the 2-chain distributed learning algorithm tests the pairs 0-$i$, $l$-0, and $j-k$. Let the outputs of these pairs be A,B and C respectively.

There are four events $E_1$,$E_2$,$E_3$ and $E_4$:
\begin{enumerate}
\item $E1$: $A=B=1$ and $C=0$: In this case, we know immediately that $i=l=1$, and $j=k=0$. Also, $i-j$ and $k-l$ both have to be of type $0-1$. Since, in $i-j$ both $i$ and $j$ are equally likely to be 0, and similarly in $k-l$, we have$\mathrm{Pr}[E1] = \frac{q^2}{4} = \frac{(1-p)^2}{(2-p)^2}$
\item $E2$: $A=B=1$ and $C=1$, In this case, we know that $i=l=1$, and $j-k$ could be of type either $0-1$ or $1-1$. In this case, both $i-j$ and $k-l$ could be of type $1-1$, or one of them is $1-1$ and the other is $0-1$, we have $\mathrm{Pr}[E2] = (1-q)^2 + q(1-q) = \frac{p(2-p)}{(2-p)^2}$.
\item $E3$: $A=B=0$ and $C=1$:  In this case, we know that $i=l=0$, and $j=k=1$. By a calculation similar to the $E_1$ case, we have $\mathrm{Pr}[E3] = \frac{q^2}{4} = \frac{(1-p)^2}{(2-p)^2}$.
\item $E4$: $A=1,B=0$ or $A=0,B=1$ and $C=1$: In this case, either $i=1, l=0$ or $i=0, l=1$. When $i=1, l=0$, we know that $k=1$, since $k-l$ is of type $0-1$ or $1-1$ to begin with, and $j$ could be either 0 or 1. The analogous observation holds when $i=0, l=1$.  Again by a calculation similar to that for $E_2$, we have $\mathrm{Pr}[E4] = q(1-q) = \frac{2(1-p)}{(2-p)^2}$.
\end{enumerate}

 Imagine running the 2-chain learning policy on some instance. In the case of event $E_2$ being observed by the policy, assume that an oracle then reveals the types of $j$ and $k$ (after $j-k$ have been paired once). As a result, from then on, there is no regret involving either $j$ and $k$. The access to such an oracle can only decrease the regret incurred by the 2-chain policy. Let's say the two pairs $i-j$ and $k-l$ are processed as a 2-chain block. Do a 1-chain replacement using $i-j$ and $k-l$, whereby each of these are tested with two known 0 type agents in sequence. Without loss of generality, this involves making the pairs $0-i$, $0-j$ and $k-l$ first, and then since we will have learned the types of $i$ and $j$, we do $0-k$ and $0-l$. We now calculate how this replacement affects the accumulation of regret. We do this by conditioning on each of the events $E1$ through $E4$. 
 
 Recall that regret can be computed by looking at just the $1-1$ pairs formed, by Lemma \ref{lma:char of max regret}. In each of these events, denote the regret incurred by 1-chain learning and 2-chain learning as $X_i$ and $Y_i$ respectively. We now describe how we compare the regret accrued by the two policies.

 Under event $E_1$, since both $i-j$ and $k-l$ are of type $0-1$, no $1-1$ pair is formed by either the 2-chain policy or the 1-chain replacement. Therefore, we have $X_1 = Y_1 = 0$.
 
 Under event $E_2$, $Y_2$ is just equal to the probability that $j-k$ is a $1-1$ pair, which is $1-q$. Therefore we have $Y_2 = \frac{p}{2-p}$. Since we invoke an oracle that reveals the types of $j$ and $k$, after they have been paired once, there is no more regret involving $j$ or $k$.
 
Also, under event $E_2$, if $j-k$ is of type $1-1$, then the 1-chain replacement has a regret of $1$ since in this case both $i-j$ and $k-l$ are of type $1-1$, and the 1-chain replacement has to retain one of them in it's first step. Therefore $\textup{E}[X_2|j-k \mbox{ is of type $1-1$}] = 1$. If $j-k$ is of type $0-1$, then we have out of $i-j$ and $k-l$, one of type $0-1$ and the other of type $1-1$. , with both possiblities (i.e., $i-j$ being of type $0-1$ and $k-l$ of type $1-1$, and vice versa) being equally likely. The 1-chain replacement retains $k-l$ in it's first step, and $i-j$ is equally likely to be  $0-1$ or $1-1$. If $i-j$ is $0-1$, then $k-l$ is $1-1$, and the regret incurred is equal to $1$ (arising due to $k-l$). If $i-j$ is $1-1$, then $k-l$ is $0-1$, and there is no regret, which gives us $\textup{E}[X_2|j-k \mbox{ is of type $0-1$}] = 0.5(1) + 0.5(0) = 0.5$. Therefore $$X_2 = (1-q)\textup{E}[X_2|j-k \mbox{ is of type $1-1$}]+q\textup{E}[X_2|j-k \mbox{ is of type $0-1$}] = 1-q + 0.5q = \frac{1}{2-p}.$$

Under event $E_3$, we have $i=l=0$ and this necessarily means that $j=k=1$, and therefore $Y_3 = 1$ since $j-k$ is of type $1-1$. And also, $X_3 = 0$ since no $1-1$ pairs are formed.

Under event $E_4$, one of $i,l$ is 0 and the other is 1, both configurations being equally likely. First assume $i=1$ and $l=0$. This implies that $k=1$. $j$ can be of either type, and by our i.i.d. Bernoulli model, is 1 with probability $p$ and $0$ otherwise. In the case that $j=1$, we have one $1-1$ pair matched by 2-chain learning. Hence $\textup{E}[Y_4|i=1,l=0] = p$, and similarly $\textup{E}[Y_4|i=1,l=0] = p$, whence $Y_4 = p$. In this case, as before, 1-chain replacement retains $k-l$ in the first step. If $i=1,l=0$, then $k-l$ is $0-1$ and there is no regret. If $i=0,l=1$, then $k$ is 1 with probability $p$ and 0 otherwise. Therefore, $k-l$ is $1-1$ with probability $p$ and consequently, and $X_4 = 0.5p$.

The expected regret of the 2-chain block corresponding to $i-j$ and $k-l$ (with the oracle being invoked in case of event $E_2$) is
\begin{align*}
\sum_{i}\mathrm{Pr}[E1]Y_1 = \frac{p(2-p)}{(2-p)^2}(\frac{p}{2-p}) + \frac{(1-p)^2}{(2-p)^2}(1) + \frac{2(1-p)}{(2-p)^2}(p) = \frac{1}{(2-p)^2}.
\end{align*}

The expected regret of the 1-chain replacement can be computed to be 
\begin{align*}
\sum_{i}\mathrm{Pr}[E1]X_1 = \frac{p(2-p)}{(2-p)^2}(\frac{1}{2-p}) + \frac{2(1-p)}{(2-p)^2}(0.5p) = \frac{p}{2-p}.
\end{align*}

We see clearly that $\sum_{i}\mathrm{Pr}[E1]X_1 < \sum_{i}\mathrm{Pr}[E1]Y_1$, and by invoking a repeated replacement argument analogous to the proof of Theorem \ref{thm:2chain}.\ref{thm:2chain_a} we have that the 1-chain policy outperforms the 2-chain policy. 
\end{proof}

\end{document}